\theoremstyle{plain}
\newtheorem{theorem}{Theorem}[section]
\newtheorem{lemma}[theorem]{Lemma}
\theoremstyle{definition}
\newtheorem{definition}[theorem]{Definition}
\newtheorem{fact}[theorem]{Fact}
\newtheorem{Proposition}[theorem]{Proposition}
\DeclarePairedDelimiter{\norm}{\|}{\|}
\newif\ifshowcomments
\newcommand{\seteditorcolor}[2]{\expandafter\def\csname editor@color@#1\endcsname{#2}}
\newcommand{\geteditorcolor}[1]{\csname editor@color@#1\endcsname}
\newcommand{\seteditorlabel}[2]{\expandafter\def\csname editor@label@#1\endcsname{#2}}
\newcommand{\geteditorlabel}[1]{\csname editor@label@#1\endcsname}
\newcommand{\commfmt}[3]{\ifshowcomments\textcolor{#1}{[#2: #3] }\fi}
\newcommand{\editlabel}[3]{\ifshowcomments\textcolor{#1}{[#2] }\fi}
\newcommand{\cmtby}[2]{\commfmt{\geteditorcolor{#1}}{\geteditorlabel{#1}}{#2}}
\newcommand{\addby}[2]{%
  \ifshowcomments
    \editlabel{\geteditorcolor{#1}}{\geteditorlabel{#1}}{}%
    \textcolor{\geteditorcolor{#1}}{{#2} }%
    \editlabel{\geteditorcolor{#1}}{\geteditorlabel{#1}}{}%
  \else
    #2%
  \fi
}
\newcommand{\delby}[2]{%
  \ifshowcomments
    \editlabel{\geteditorcolor{#1}}{\geteditorlabel{#1}}{}%
    \textcolor{\geteditorcolor{#1}}{\sout{#2} }%
    \editlabel{\geteditorcolor{#1}}{\geteditorlabel{#1}}{}%
  \fi
}
\newcommand{\repby}[3]{%
  \ifshowcomments
    \editlabel{\geteditorcolor{#1}}{\geteditorlabel{#1}}{}%
    \textcolor{\geteditorcolor{#1}}{\sout{#2}\;\(\rightarrow\)\;{#3} }%
    \editlabel{\geteditorcolor{#1}}{\geteditorlabel{#1}}{}%
  \else
    #3%
  \fi
}
\newcommand{\DefineEditor}[3]{%
  \seteditorcolor{#1}{#3}%
  \seteditorlabel{#1}{#2}%
  \expandafter\newcommand\csname #1\endcsname[1]{\cmtby{#1}{##1}}%
  \expandafter\newcommand\csname #1add\endcsname[1]{\addby{#1}{##1}}%
  \expandafter\newcommand\csname #1@add\endcsname[1]{\addby{#1}{##1}}%
  \expandafter\newcommand\csname #1del\endcsname[1]{\delby{#1}{##1}}%
  \expandafter\newcommand\csname #1@del\endcsname[1]{\delby{#1}{##1}}%
  \expandafter\newcommand\csname #1rep\endcsname[2]{\repby{#1}{##1}{##2}}%
  \expandafter\newcommand\csname #1@rep\endcsname[2]{\repby{#1}{##1}{##2}}%
}
\icmltitlerunning{Who Said Neural Networks Aren't Linear?}
\begin{document}

\twocolumn[
\icmltitle{Who Said Neural Networks Aren't Linear?}

\begin{center}
\vspace{-0.2cm}
\begin{tabular}{ccc}
\textbf{Nimrod Berman}\textsuperscript{*} &
\textbf{Assaf Hallak}\textsuperscript{*} &
\textbf{Assaf Shocher}\textsuperscript{*\textdagger} \\
Ben-Gurion University &
NVIDIA &
Technion \\
\texttt{bermann@post.bgu.ac.il} &
\texttt{ahallak@nvidia.com} &
\texttt{assaf.sh@technion.ac.il}
\end{tabular}
\vspace{-0.2cm}
\end{center}

\icmlkeywords{Deep Learning, Linearity, Invertible Networks, Diffusion}

\vskip 0.3in
]

\renewcommand{\thefootnote}{\fnsymbol{footnote}}
\footnotetext[1]{Equal contribution.}
\footnotetext[2]{A.S. is a Chaya Fellow, supported by the Chaya Career Advancement Chair.}
\footnotetext{Code available at \url{https://github.com/assafshocher/Linearizer}}
\renewcommand{\thefootnote}{\arabic{footnote}}

\begin{abstract}
Neural networks are famously nonlinear. However, linearity is defined relative to a pair of vector spaces, $f:\mathcal{X}\to\mathcal{Y}$. Leveraging the algebraic concept of transport of structure, we propose a method to explicitly identify non-standard vector spaces where a neural network acts as a linear operator. When sandwiching a linear operator $A$ between two invertible neural networks, $f(x)=g_y^{-1}(A g_x(x))$, the corresponding vector spaces $\mathcal{X}$ and $\mathcal{Y}$ are induced by newly defined addition and scaling actions derived from $g_x$ and $g_y$. We term this kind of architecture a Linearizer. This framework makes the entire arsenal of linear algebra, including SVD, pseudo-inverse, orthogonal projection and more, applicable to nonlinear mappings. Furthermore, we show that the composition of two Linearizers that share a neural network is also a Linearizer. We leverage this property and demonstrate that training diffusion models using our architecture makes the hundreds of sampling steps collapse into a single step. We further utilize our framework to enforce idempotency (i.e.\ $f(f(x))=f(x)$) on networks leading to a globally projective generative model and to demonstrate modular style transfer.
\end{abstract}

\vspace{-7mm}
\section{Introduction}
\vspace{-1mm}

Linearity occupies a privileged position in mathematics, physics, and engineering. Linear systems admit a rich and elegant theory: they can be decomposed through eigenvalue and singular value analysis, inverted or pseudo-inverted with well-understood stability guarantees, and manipulated compositionally without loss of structure. These properties are not only aesthetically pleasing, they underpin the computational efficiency of countless algorithms in signal processing, control theory, and scientific computing. Crucially, repeated application of linear operators simplifies rather than complicates: iteration reduces to powers of eigenvalues, continuous evolution is captured by the exponential of an operator, and composition preserves linearity \citep{strang2022introduction}.

In contrast, non-linear systems, while more expressive, often defy such a structure. Iterating nonlinear mappings can quickly lead to intractable dynamics; inversion may be ill-posed or undefined; and even simple compositional questions lack closed-form answers \citep{strogatz2024nonlinear}. Neural networks, the dominant modeling tool in modern machine learning, are famously nonlinear, placing their analysis and manipulation outside the reach of classical linear algebra \citep{hornik1989multilayer}. As a result, tasks that are trivial in the linear setting, such as projecting onto a subspace, enforcing idempotency, or collapsing iterative procedures, become major challenges in the nonlinear regime that require engineered loss functions and optimization schemes. This motivates our approach: applying the principle of conjugation (change of coordinates) to deep neural networks to induce a latent space where the mapping becomes linear. If so, we gain access to linear methods without sacrificing nonlinear expressiveness.

In this paper, we propose a framework that precisely achieves this goal. By embedding linear operators between two invertible neural networks, we induce new vector space structures under which the overall mapping is linear. We call such architectures \emph{Linearizers}. This perspective not only offers a new lens on neural networks, but also enables powerful applications: collapsing hundreds of diffusion sampling steps into one, enforcing structural properties such as idempotency, and more. In short, Linearizers provide a bridge between the expressive flexibility of nonlinear models and the analytical tractability of linear algebra.

\vspace{-2mm}
\section{Linearizer Framework}
\vspace{-2mm}

Linearity is not an absolute concept; it is a property defined relative to a pair of vector spaces, $f:\mathcal{X} \to \mathcal{Y}$. This motivates the use of \textbf{transport of structure} \citep{maclane1999algebra} to identify a pair of spaces $\mathcal{X}$ and $\mathcal{Y}$, for which a function that is nonlinear over standard Euclidean space is, in fact, perfectly linear. Recall that a vector space is comprised of a set of vectors (e.g.\ $\mathbb{R}^N$), a field of scalars (e.g., $\mathbb{R}$), and two fundamental operations: vector addition ($+$) and scalar multiplication ($\cdot$). We propose inducing new \textbf{learnable} vector spaces by redefining operations, while keeping vectors and field unchanged.

\begin{figure*}[t]
    \centering
    \includegraphics[width=0.80\linewidth]{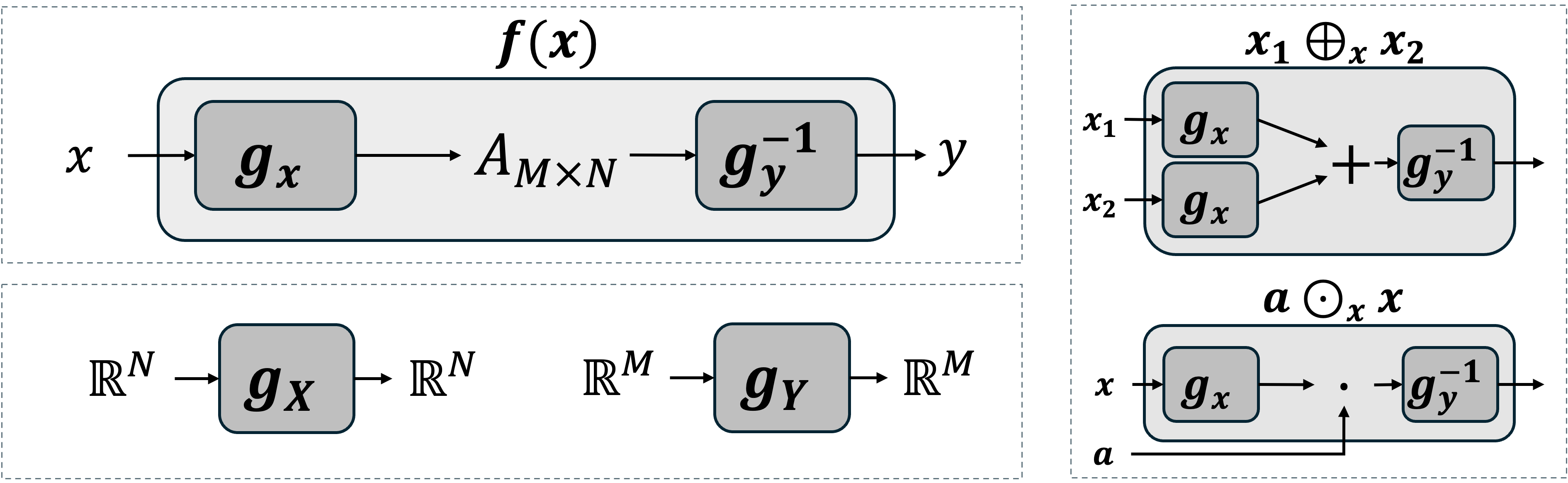}
    \caption{\small \textbf{Left.} The Linearizer structure (top) is a linear operation sandwiched between two invertible functions (bottom). \textbf{Right.} Vector addition and scalar multiplication define induced vector spaces for which $f$ is linear.}
    \label{fig:Linearizer}

\vspace{-5mm}
\end{figure*}

\subsection{Definitions}

We introduce a formalism we term \textit{Linearizer}, in which the relevant vector spaces are immediately identifiable by construction as they are isomorphic to the Euclidean space. The architecture we propose can be trained from scratch or by distillation from an existing model. Our approach is made practical by invertible neural networks \cite{rezende2015flows, dinh2014nice, dinh2017realnvp}. We build our model by wrapping a linear operator, a matrix $A$, between two such invertible networks, $g_x$ and $g_y$:
\begin{definition}[Linearizer]
Let $\mathcal{X}, \mathcal{Y}$ be two spaces and $g_x:\mathcal{X}\to\mathcal{X}$, $g_y:\mathcal{Y}\to\mathcal{Y}$ be two corresponding invertible functions. Also, let $A:\mathcal{X}\to\mathcal{Y}$ be a linear operator. Then we define the Linearizer $\mathbb{L}_{\{g_x, g_y, A\}}$ as the following function $f:\mathcal{X}\to\mathcal{Y}$:
\begin{equation}
    f(x) = \mathbb{L}_{\{g_x, g_y, A\}}(x) =  g_y^{-1}(A g_x(x))
\end{equation}
\end{definition}

For this construction, we can define the corresponding vector spaces $\mathcal{X}$ and $\mathcal{Y}$, also shown in Figure~\ref{fig:Linearizer}:
\begin{definition}[Induced Vector Space Operations]
Let $g: V \to V$ be an invertible function. We define a new set of operations, according to the \textbf{transport of structure},  $\oplus$ and $\odot$, for any two vectors $v_1, v_2 \in V$ and any scalar $a \in \mathbb{R}$:
\begin{align}
    v_1 \oplus_g v_2 &:= g^{-1}(g(v_1) + g(v_2)) \\
    a \odot_g v_1 &:= g^{-1}(a \cdot g(v_1))
\end{align}
\end{definition}

$g_x$, $g_y$, and the core $A$ are \emph{learned jointly} end-to-end.\footnote{Specifically, in our implementation each of $g_x$ and $g_y$ uses \textbf{6} invertible blocks (Appendix~\ref{app:impl_g1}).
The specific INN architecture -- affine coupling layers with a tiny U-Net conditioner -- is detailed in Appendix~\ref{app:impl_g2} and Appendix~\ref{app:impl_g4}.
The linear core $A$ is task-specific and its implementations are summarized in Appendix~\ref{app:impl_g3}.}

\subsection{Linearity}

The input space $\mathcal{X}$ is defined by operations $(\oplus_x, \odot_x)$ induced by $g_x$, and the output space $\mathcal{Y}$ by $(\oplus_y, \odot_y)$ induced by $g_y$. This is a \textbf{vector-space isomorphism} which promises preservation of geometry.  
\begin{Proposition}\label{lem:vector_space}
$(V,\oplus_g,\odot_g)$ is a vector space over $\mathbb{R}$. This is a direct consequence of the isomorphism. However, we also provide full verification in \emph{Appendix~\ref{app:axioms}.}
\end{Proposition}

\begin{Proposition}
\label{lem:linearity}
The function $f(x)$ is a linear map from the vector space $\mathcal{X}$ to the vector space $\mathcal{Y}$.
\end{Proposition}
\begin{proof}
Linearity is preserved in transport of structure, but we can also verify:
\begin{flalign*}
    & f(a_1 \odot_x x_1 \oplus_x a_2 \odot_x x_2) && \\
    & = g_y^{-1}(A g_x( g_x^{-1}(a_1 g_x(x_1) + a_2 g_x(x_2)))) && \\
    & = g_y^{-1}(a_1 g_y(g_y^{-1}(A g_x(x_1))) + a_2 g_y(g_y^{-1}(A g_x(x_2)))) && \\
    & = a_1 \odot_y f(x_1) \oplus_y a_2 \odot_y f(x_2) && \qedhere
\end{flalign*}
\end{proof}

\subsection{Intuition}
The Linearizer can be understood through several lenses. 

\vspace{-3mm}
\paragraph{Linear algebra analogy.} The Linearizer is analogous to an eigendecomposition or SVD. Just as a matrix is diagonal (i.e.\ simplest) in its eigenbasis, our function $f$ is a simple matrix multiplication in the ``linear basis'' defined by $g_x$ and $g_y$. This provides a natural coordinate system in which to analyze and manipulate the function. For example, repeated application of a Linearizer with a shared basis ($g_x=g_y=g$) is equivalent to taking a power of its core matrix:
\begin{equation}
\begin{split}
    \mathbb{L}_{\{g, g, A\}}^{\circ N}(x) 
    &= \underbrace{f(f(\cdots f}_{N \text{ times}}(x))) \\
    = g^{-1}(A^N \cdot g(x)) 
    &= \mathbb{L}_{\{g, g, A^N\}}(x)
\end{split}
\end{equation}

\vspace{-3mm}
\paragraph{Geometric interpretation.} $g_x$ can be seen as a diffeomorphism. Equip the input with the pullback metric $g_x^{\ast}\langle\cdot,\cdot\rangle_{\mathbb{R}^N}$ induced by the Euclidean metric in latent space. With respect to this metric, straight lines in latent correspond to geodesics in input, and linear interpolation in the induced coordinates maps to smooth and semantically coherent curves in input space.

\begin{figure*}[t]
\vspace{4mm}
\centering
\makebox[\linewidth][c]{%
  \begin{minipage}[t]{0.245\linewidth}
    \centering
    \begin{overpic}[width=\linewidth,trim=0mm 48mm 100mm 0mm,clip]{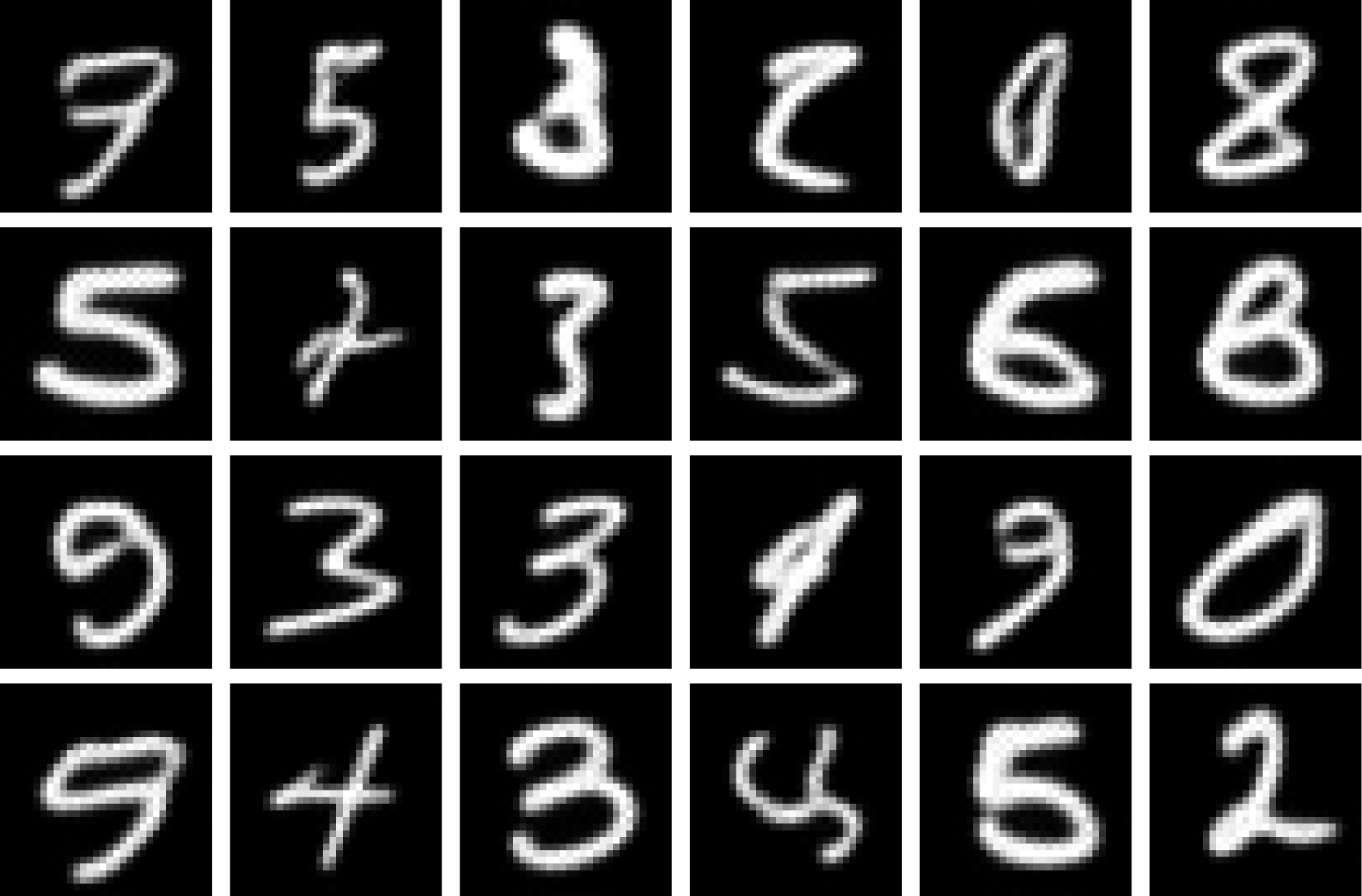}
      \put(50,85){\makebox(0,0){\footnotesize (a)}}
    \end{overpic}
  \end{minipage}\hspace{0.01\linewidth}
  \begin{minipage}[t]{0.245\linewidth}
    \centering
    \begin{overpic}[width=\linewidth,trim=0mm 48mm 100mm 0mm,clip]{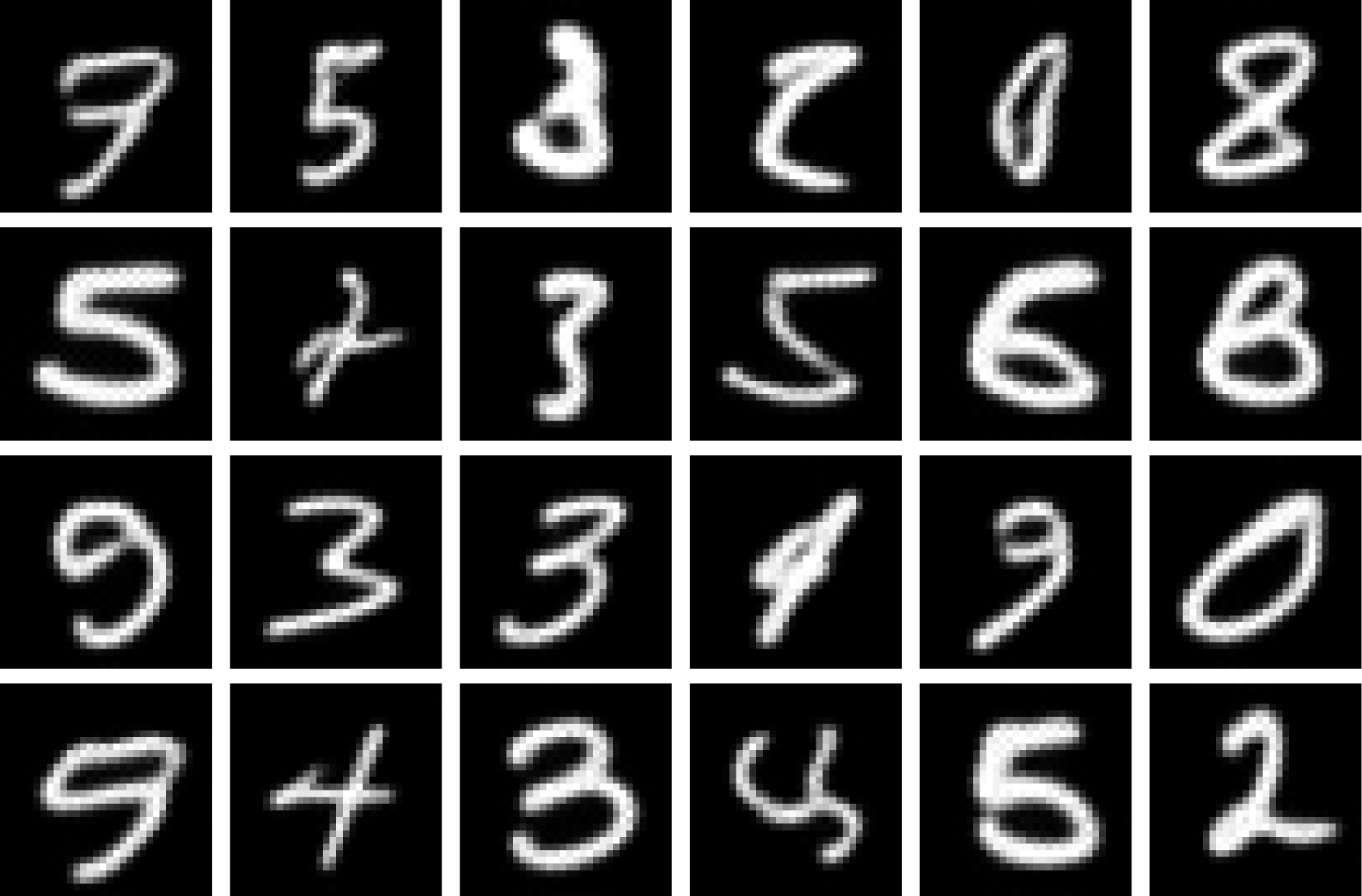}
      \put(50,85){\makebox(0,0){\footnotesize (b)}}
    \end{overpic}
  \end{minipage}\hspace{0.01\linewidth}
  \begin{minipage}[t]{0.245\linewidth}
    \centering
    \begin{overpic}[width=\linewidth,trim=0mm 48mm 100mm 0mm,clip]{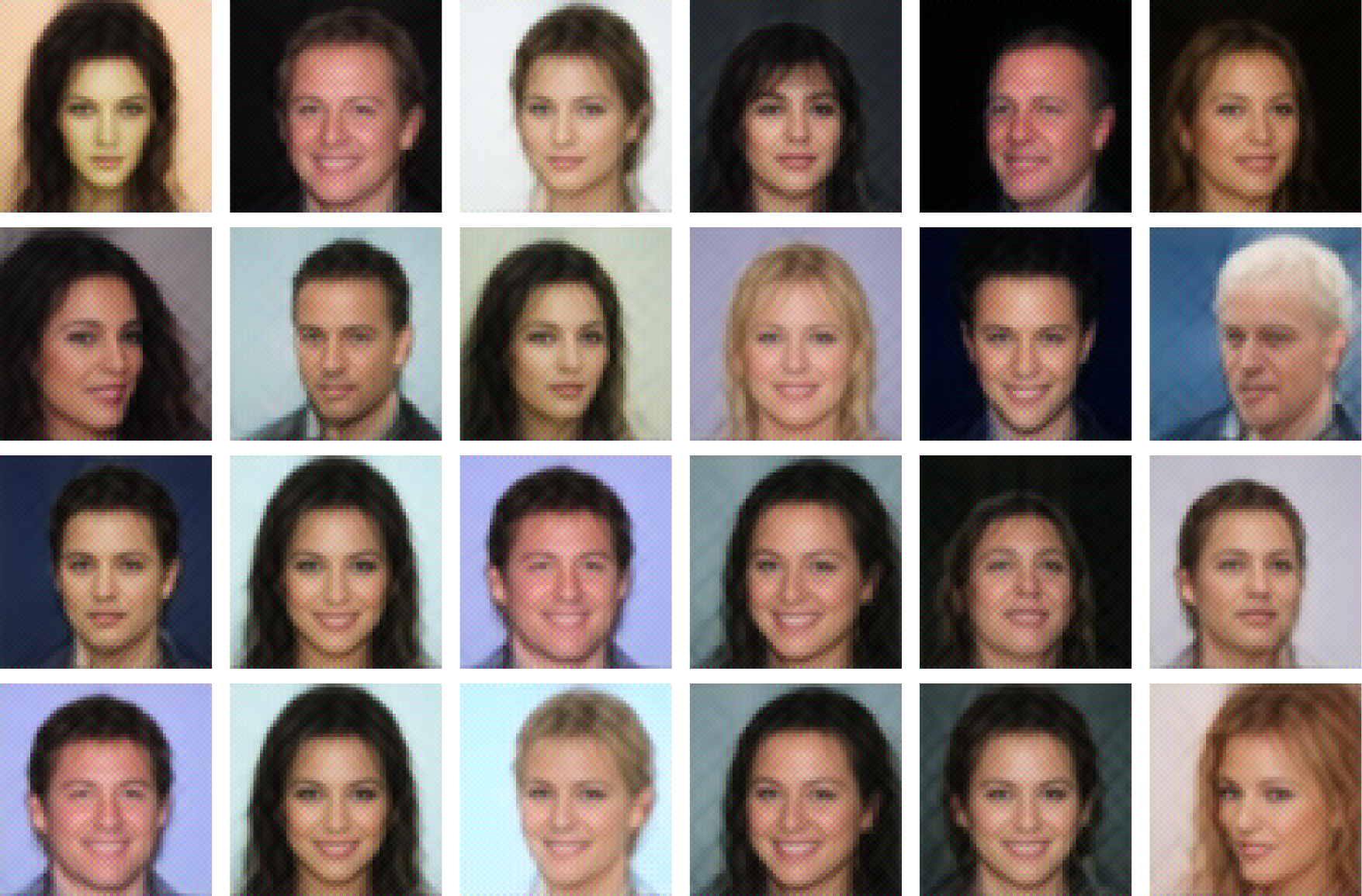}
      \put(50,85){\makebox(0,0){\footnotesize (a)}}
    \end{overpic}
  \end{minipage}\hspace{0.01\linewidth}
  \begin{minipage}[t]{0.245\linewidth}
    \centering
    \begin{overpic}[width=\linewidth,trim=0mm 48mm 100mm 0mm,clip]{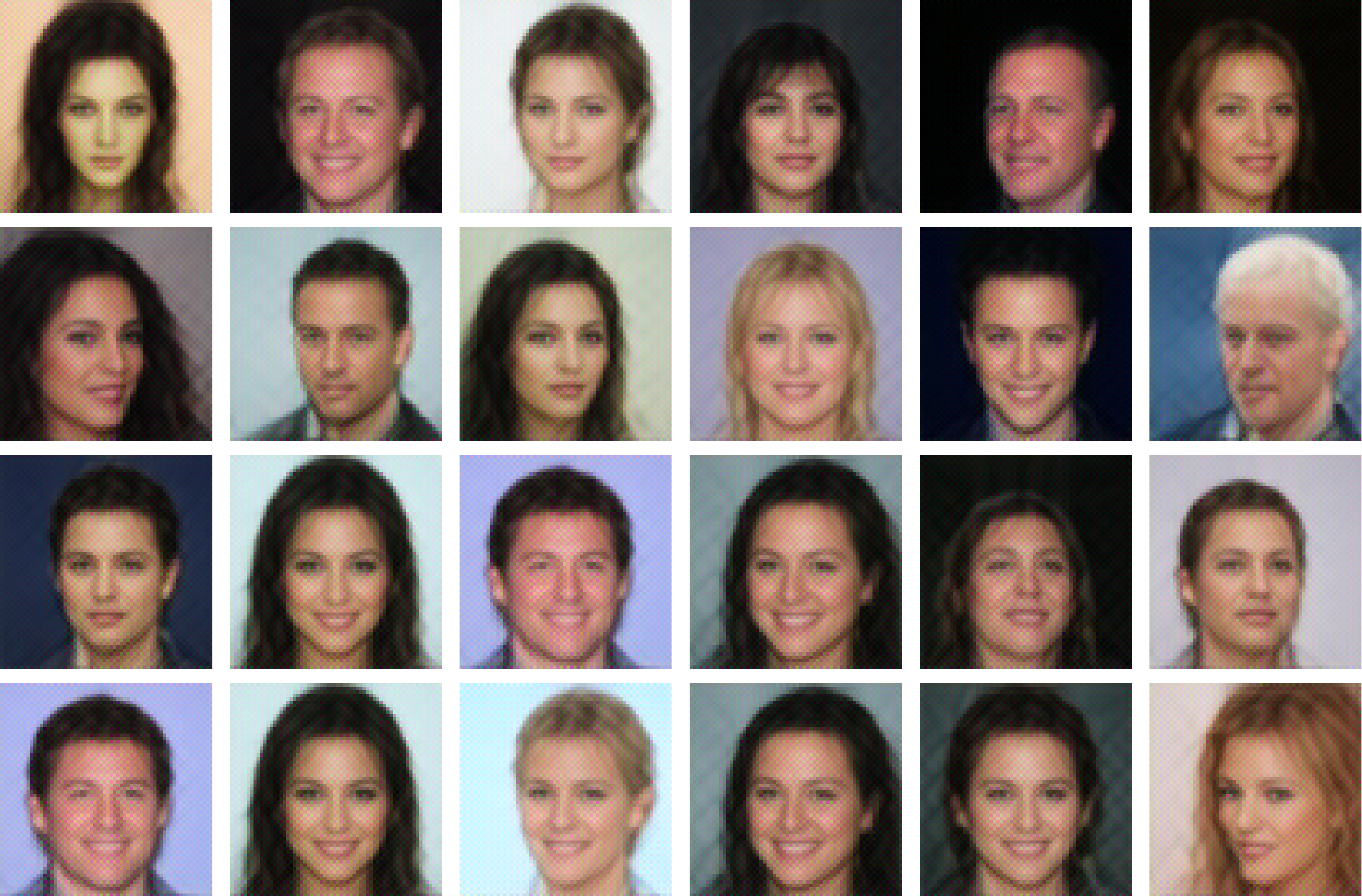}
      \put(50,85){\makebox(0,0){\footnotesize (b)}}
    \end{overpic}
  \end{minipage}
}%
\caption{\small Comparison between multi-step and one-step flow matching. Panel labels: \textbf{(a)} multi-step Linear FM. \textbf{(b)} one-step Linear flow matching (FM).}
\label{fig:flow_samples_comparison}
\vspace{-6mm}
\end{figure*}

\subsection{Properties}

We review how basic properties of linear transforms are expressed by Linearizers. All these properties follow directly from the vector space isomorphism (transport of structure). Yet, we explicitly derive them separately here.


\subsubsection{Composition}
\begin{Proposition}
    The composition of two Linearizer functions with compatible spaces $f_1: \mathcal{X} \to \mathcal{Y}$ and $f_2: \mathcal{Y} \to \mathcal{Z}$, is also a Linearizer.
\vspace{-0.1cm}\begin{equation}
\begin{split}
    (f_2 \circ f_1)(x) 
    &= g_z^{-1}(A_2 g_y(g_y^{-1}(A_1 g_x(x)))) \\
    &= g_z^{-1}((A_2 A_1) \cdot g_x(x))
\end{split}
\end{equation}
\end{Proposition}

\subsubsection{Inner Product and Hilbert Space}
\begin{definition}[Induced Inner Product]\label{def:innerprod}
Given an invertible map $g:V\to\mathbb{R}^N$, the induced inner product on $V$ is
\begin{equation}
\langle v_1,v_2\rangle_g := \langle g(v_1), g(v_2)\rangle_{\mathbb{R}^N},
\end{equation}
where the right-hand side is the standard Euclidean dot product.
\end{definition}

Equipped with this inner product, the induced vector spaces make \emph{Hilbert spaces}.
\begin{Proposition}[Induced spaces are Hilbert]\label{lem:hilbert}
Let $g:V \to \mathbb{R}^n$ be a smooth bijection and endow $V$ with the induced vector space operations $(\oplus_g,\odot_g)$ and inner product
\mbox{$
\langle u,v\rangle_g \;:=\; \langle g(u),\,g(v)\rangle_{\mathbb{R}^n}.
$}
Then $(V,\langle\cdot,\cdot\rangle_g)$ is a Hilbert space.
This is a characteristic of the transport of structure. For explicit proof \emph{See Appendix~\ref{app:hilbert}.}
\end{Proposition} 

\subsubsection{Transpose}
\begin{Proposition}[Transpose]\label{lem:transpose}
Let $f(x)=g_y^{-1}(A g_x(x))$ be a Linearizer.  
Its transpose $f^\top:\mathcal{Y}\to\mathcal{X}$ with respect to the induced inner products is
\begin{equation}
f^\top(y) = g_x^{-1} \big(A^\top g_y(y)\big).
\end{equation}
\end{Proposition}
\begin{proof}
For all $x\in\mathcal{X}, y\in\mathcal{Y}$,
\begin{align}
\langle f(x),y\rangle_{g_y} 
&= \langle A g_x(x), g_y(y)\rangle_{\mathbb{R}^N} 
= \langle g_x(x), A^\top g_y(y)\rangle_{\mathbb{R}^N} \\
&= \langle x, g_x^{-1}(A^\top g_y(y))\rangle_{g_x} 
= \langle x, f^\top(y)\rangle_{g_x}. \qedhere
\end{align}
\end{proof}

\subsubsection{Singular Value Decomposition}

\begin{Proposition}[SVD of a Linearizer]\label{lem:svd}
Let $A=U\Sigma V^\top$ be the singular value decomposition of $A$.  
Then the SVD of $f(x)=g_y^{-1}(A g_x(x))$ is given by singular values $\Sigma$, input singular vectors $\tilde v_i=g_x^{-1}(v_i)$, and output singular vectors $\tilde u_i=g_y^{-1}(u_i)$.  
\end{Proposition} 
Spectral properties are transferred in isomorphic change of coordinates. We show this explicitly in \textit{Appendix~\ref{app:svd}.}

\section{Theoretical Analysis}
\label{sec:theory}
Before demonstrating applications, we address two fundamental theoretical questions: the expressive capacity of the Linearizer and the functional necessity of the core matrix.

\subsection{Expressiveness}
\label{sec:expressiveness}
A natural concern is whether enforcing a linear bottleneck restricts the model's capacity. Strictly speaking, the architecture does impose topological constraints on the global function $f$. For example, the kernel (null space) of $f$ is determined entirely by the kernel of $A$. Since linear subspaces are either trivial or infinite-dimensional, a Linearizer cannot, for instance, map exactly three distinct points to zero while mapping a fourth point to non-zero, as this would violate the subspace properties of the kernel in the latent space.

Despite this topological constraint, \textbf{we prove that the Linearizer is expressive enough to fit any finite training set.} In {Appendix~\ref{thm:main-approx} (Theorem H.4)}, we formally prove that for any finite dataset $\{(x_i, y_i)\}_{i=1}^N$, there exist invertible networks $g_x, g_y$ and a linear map $A$ such that the Linearizer satisfies $f(x_i) = y_i$ to arbitrary precision.

How can the model be topologically constrained yet fit any dataset? The resolution lies in the degrees of freedom available outside the data support. For example, the required infinite null-space can lie outside of the data. This logic also applies to infinite continuous manifolds of lower dimension than the space, allowing generalization on high-dimensional data, such as images or text embeddings that often reside on a low-dimensional manifold 

\subsection{On the Role of the Core Operator A}
\label{sec:role_of_A}
Given the power of $g$, what is the role of the matrix $A$? We analyze this in two regimes.

\textbf{The General Regime ($g_x \neq g_y$).} Let the SVD of $A$ be $U\Sigma V^T$. In the general case where $g_x$ and $g_y$ are distinct, we could define new invertible networks $g'_y = g_y \circ U$ and $g'_x = V^T \circ g_x$. The function would then be $f(x) = (g'_y)^{-1}(\Sigma g'_x(x))$. If we further allow the networks to absorb the scaling defined by the singular values in $\Sigma$, then the core operator could be reduced to a diagonal matrix of zeros and ones (note that this does not imply idempotency unless $g_x=g_y$). In this sense, the fundamental role of $A$ is simply to define the rank of the function $f$.

\textbf{The Constrained Regime ($g_x = g_y = g$).}
In applications like Flow Matching or Idempotency, the input and output spaces are identical so $g_x = g_y$. Here, the two networks are not independent. To preserve the structure $f(x)=g^{-1}(A g(x))$, any transformation absorbed by $g$ must be inverted by $g^{-1}$. This rules out $g$ absorbing SVD elements $U, V$ as they are not necessarily mutual inverses. For diagonalizable matrices however, eigen-decomposition can be applied. Then, $A = Q\Lambda Q^{-1}$, where one could define $g' = Q^{-1} \circ g$ and $f(x) = (g')^{-1}(\Lambda g'(x))$. Here, $A$ determines the function's eigenvalues—a stronger role than just its rank—but this is 

\textbf{Practical Necessity: Functional Diversity.} Does the above analysis suggest that we can always use a diagonal $A$?
Crucially, this analysis only applies to learning a single static function. However, our applications typically utilize "families" of Linearizers, that share the same $gx, gy$ with different $A$ matrices. For example our flow-matching application employs a different A for each timestep. The expressiveness inside such family is dictated by the expressiveness of $A$ which would be significantly reduced if it were diagonal.

\section{Applications}
\label{sec:applications}
Having established the theoretical foundations, we now present proof-of-concept applications that highlight advantages of our framework, leaving scaling and engineering refinements for future work.

\begin{figure*}[t]
\centering
\begin{overpic}[width=\linewidth]{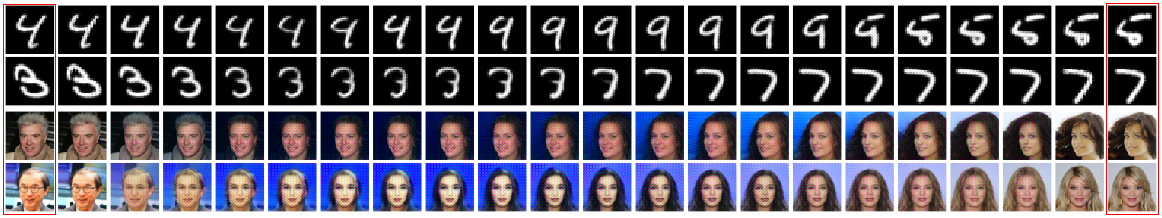}
  \put(1.5,19){{\small  $x_1$}}
  \put(96,19){{\small  $x_2$}}
\end{overpic}
\caption{\small \textbf{One-step inversion examples:} Left and right (in red): original (not generated) data $x_1$ and $x_2$. Intermediate images obtained by latent interpolation. See Appendix~\ref{app:interp} for more and higher-resolution results.}
\label{fig:inverse_interpolation}
\vspace{-5mm}
\end{figure*}

\subsection{One-Step Flow Matching}
\label{sec:app_flow}

Flow matching \citep{lipman2023flowmatching, song2021score} trains a network to predict the velocity field that transports noise samples to data points. In our setting, diffusion models~\cite{ho2020denoising} and flow matching~\cite{lipman2023flowmatching} are, from an algorithmic standpoint, equivalent under standard assumptions~\cite{gao2024diffusion}; hence we use the two frameworks interchangeably. Traditionally, this velocity must be integrated over time with many small steps, making generation slow. We train flow matching using our Linearizer as the backbone, with a single vector space ($g_x=g_y=g$). The key property that helps us achieve one-step generation is the closure of linear operators. Applying the model sequentially with some simple linear operations in between steps is a chain of linear operations. This chain can be collapsed to a single linear operator so that the entire trajectory is realized in a single step.

\vspace{-3mm}
\paragraph{Training.}
Training is just as standard FM, only using the Linearizer model and the induced spaces. 
We define the forward diffusion process:
\begin{equation}
x_t = (1-t) \odot x_0 \oplus t \odot x_1 
     = g^{-1} \bigl((1-t)\,g(x_0) + t\,g(x_1)\bigr).
\end{equation}
This is a straight line in the $g$-space, mapped back to a curve in the data space.  
The target velocity is
\begin{equation}
v = x_1 \ominus x_0 
  = g^{-1} \bigl(g(x_1)-g(x_0)\bigr).
\end{equation}
We parameterize a time-dependent Linearizer
\begin{equation}
f(x_t,t) = g^{-1}(A_t g(x_t)),
\end{equation}
and train it to predict $v$. The loss is
\begin{equation}
\begin{split}
    \mathcal{L} 
    &= \mathbb{E}_{x_0,x_1,t}\, \bigl\|\, v \ominus f(x_t,t) \,\bigr\|^2 \\
    &= \mathbb{E}_{x_0,x_1,t}\, \bigl\|\, g^{-1}\Bigl(g(x_1)-g(x_0) - A_t g(x_t)\Bigr)\,\bigr\|^2
\end{split}
\end{equation}
This objective ensures that the learned operators $A_t$ approximate the true velocity of the flow within the latent space of the Linearizer.

\vspace{-3mm}
\paragraph{Sampling as a collapsed operator.}
In inference time, standard practice is to discretize the ODE with many steps. 
In the induced space, an Euler update reads
\begin{equation}
x_{t+\Delta t} = x_t \oplus \bigl(\Delta t \odot f(x_t,t)\bigr),
\end{equation}
which expands to
\begin{equation}
g(x_{t+\Delta t}) = (I + \Delta t\, A_t)\, g(x_t).
\end{equation}
Iterating this $N$ times produces
\begin{equation}
g(\hat{x}_1) \approx \underbrace{\Bigl[\prod_{i=0}^{N-1}(I + \Delta t\, A_{t_i})\Bigr]}_{:=B} g(x_0).
\end{equation}
The entire product can be collected into a single operator $B$, resulting in
\begin{equation}
\hat{x}_1 = g^{-1}(B g(x_0)).
\end{equation}
Thus, what was originally a long sequence of updates collapses into a single multiplication. In practice, higher-order integration such as Runge--Kutta \citep{runge1895numerische,kutta1901beitrag} may be used to calculate $B$ more accurately (see Appendix~\ref{app:rk}). 
$B$ is calculated only once, after training. Then generation requires only a single feedforward activation of our model on noise, regardless of the number of steps discretizing the ODE.

\vspace{-3mm}
\paragraph{Geometric intuition.}
$g$ acts as a diffeomorphism: in its latent space, the path between $x_0$ and $x_1$ is a straight line, and the velocity is constant. When mapped back to the data space via $g^{-1}$, this straight line becomes a curved trajectory. 
The Linearizer exploits this change of coordinates so that, in the right space, the velocity field is trivial, and the expensive integration disappears.

\vspace{-3mm}
\paragraph{Implementation.}
The operator $A$ is produced by an MLP whose input is $t$. To avoid huge matrix multiplications we build $A$ as a low-rank matrix, by multiplying two rectangular matrices. $g$ does not take $t$ as input. Full implementation details are provided in App.~\ref{app:impl}.

\vspace{-3mm}
\paragraph{One-step generation.}
Figure~\ref{fig:flow_samples_comparison} shows qualitative results on MNIST \cite{lecun1998mnist} (32$\times$32) and CelebA~\cite{liu2015faceattributes} (64$\times$64). As discussed, our framework supports both multi-step and one-step sampling and yields effectively identical outputs. Visually, the samples are indistinguishable across datasets; quantitatively, the mean squared error between one-step and multi-step generations is $\mathbf{3.0\times 10^{-4}}$, confirming their near-equivalence in both datasets. Additionally, we evaluate FID across different step counts and our one-step simulation (see Figure~\ref{fig:three-tables}\,(a)). The FID from \emph{100 full iterative steps} closely matches our \emph{one-step} formulation, empirically validating the method's theoretical guarantee of one-step sampling. Moreover, simulating more steps (1000$\to$1 vs.\ 100$\to$1) improves FID by $\sim$8 points, highlighting the strength of the linear-operator formulation. Furthermore, we validate one-step vs.\ 100 step fidelity, showing high similarity as presented in Figure~\ref{fig:three-tables}(c). Finally, while our absolute FID is not yet competitive with state-of-the-art systems, our goal here is to demonstrate the theory in practice rather than to exhaustively engineer for peak performance; scaling left to future work.

\paragraph{Inversion and interpolation.}
A fundamental limitation of flow models is that, in contrast to VAEs~\citep{kingma2014vae}, they lack a natural encoder: they cannot map data back into the prior (noise) space, and thus act only as decoders. As a result, inversion methods for diffusion~\citep{dhariwal2021diffusion,song2021ddim,huberman2024diffusion} have become an active research area. However, these techniques are approximate and often suffer from reconstruction errors, nonuniqueness, or computational overhead. Our framework connects diffusion-based models with encoding ability.

Because the Linearizer is linear, its Moore--Penrose pseudoinverse is also a Linearizer:

\begin{lemma}[Moore--Penrose pseudoinverse of a Linearizer]
\label{lem:lin_pinv}
The Moore--Penrose pseudoinverse of $f$ with respect to the induced inner products is
\begin{equation}
f^\dagger(y) = g_x^{-1}(A^\dagger g_y(y))
\end{equation}
\end{lemma}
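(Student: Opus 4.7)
The plan is to verify the four Moore--Penrose conditions in the induced Hilbert space setting. By Lemmas~\ref{lem:vector_space} and \ref{lem:hilbert}, $\mathcal{X}$ and $\mathcal{Y}$ are Hilbert spaces under the induced operations and inner products, and by Lemma~\ref{lem:linearity} $f$ is a bounded linear map between them. The Moore--Penrose pseudoinverse of a bounded linear operator between Hilbert spaces is uniquely characterized by the four Penrose identities $fhf=f$, $hfh=h$, $(fh)^\top=fh$, $(hf)^\top=hf$, where $(\cdot)^\top$ denotes the adjoint with respect to the induced inner products. Thus it suffices to set $h(y):=g_x^{-1}(A^\dagger g_y(y))$ and check these four identities.

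The main tool is the composition lemma together with Lemma~\ref{lem:transpose}. First I would observe that $h$ is itself a Linearizer, namely $h=\mathbb{L}_{\{g_y,g_x,A^\dagger\}}$, so by the composition rule
\begin{equation}
f\circ h = \mathbb{L}_{\{g_y,g_y,AA^\dagger\}},\qquad h\circ f = \mathbb{L}_{\{g_x,g_x,A^\dagger A\}}.
\end{equation}
Applying the composition rule once more,
\begin{equation}
f\circ h\circ f = \mathbb{L}_{\{g_x,g_y,AA^\dagger A\}},\qquad h\circ f\circ h = \mathbb{L}_{\{g_y,g_x,A^\dagger AA^\dagger\}}.
\end{equation}
The classical Penrose identities for matrices give $AA^\dagger A=A$ and $A^\dagger AA^\dagger=A^\dagger$, which immediately yields $f\circ h\circ f=f$ and $h\circ f\circ h=h$.

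For the self-adjointness conditions, I would apply Lemma~\ref{lem:transpose} to the composite Linearizers above, obtaining
\begin{equation}
(f\circ h)^\top = \mathbb{L}_{\{g_y,g_y,(AA^\dagger)^\top\}},\qquad (h\circ f)^\top = \mathbb{L}_{\{g_x,g_x,(A^\dagger A)^\top\}}.
\end{equation}
Since $AA^\dagger$ and $A^\dagger A$ are symmetric (another Penrose identity for matrices), these equal $f\circ h$ and $h\circ f$ respectively, completing the four conditions and hence identifying $h$ with $f^\dagger$ by uniqueness.

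The only potential subtlety is making sure that the adjoint used in Lemma~\ref{lem:transpose} matches the Hilbert-space adjoint required by the Penrose characterization; this is immediate because the induced inner products are defined precisely so that $g_x,g_y$ are linear isometries onto $\mathbb{R}^N$, reducing the whole question to the standard matrix pseudoinverse in Euclidean space. In that sense there is no real obstacle: the proof is essentially bookkeeping that turns the four matrix Penrose identities into their Linearizer counterparts via composition and transposition.
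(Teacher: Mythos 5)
Your proposal is correct and follows essentially the same route as the paper: both verify the four Penrose identities and reduce each to the corresponding matrix identity for $A^\dagger$ via the invertibility of $g_x,g_y$. The only cosmetic difference is that you route the algebra through the composition and transpose lemmas as black boxes, whereas the paper expands the compositions $g_y^{-1}(AA^\dagger A\,g_x)$ etc.\ explicitly, which amounts to the same computation.
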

\begin{proof}
We verify the four Penrose equations \citep{penrose1955generalized} in Appendix~\ref{app:pinv}.
\end{proof}

This property enables the exact encoding of data in the latent space. For example, given two data points $x_1,x_2$, we can encode them via
$
z^{a} = (1-a)\, f^\dagger(x_1) \;+\; a\, f^\dagger(x_2),
$
and decode back by
$
\hat{x}^{a} = f(z^{a}).
$
Figure~\ref{fig:inverse_interpolation} shows latent interpolation between two real (non-generated) images. Additionally, we evaluate reconstruction quality using two standard metrics: LPIPS~\cite{zhang2018unreasonable} and PSNR. 
Figure~\ref{fig:three-tables}(b) shows inversion-reconstruction consistency. 
Our results confirm high-quality information preservation.

\begin{figure*}[t]
\small
\begin{subfigure}{0.38\linewidth}
\setlength{\tabcolsep}{3pt} 
\small
\begin{tabular}{ccc|cc}
\toprule
\shortstack{\textbf{Full Steps} \\1} & 10 & 100 & \shortstack{\textbf{One-steps} \\(100$\to$1)} & \shortstack{\\(1000$\to$1)} \\
\midrule
405 & 152 & 127  & 131 & 123  \\
\bottomrule
\end{tabular}
\caption{\small FID comparison.}
\label{tab:fid-celeba}
\end{subfigure}
\hfill
\begin{subfigure}{0.32\linewidth}
\setlength{\tabcolsep}{3pt} 
\centering
\small
\begin{tabular}{lcc}
\toprule
& \textbf{PSNR}  & \textbf{LPIPS}  \\
\midrule
MNIST  & 31.6 & .008 \\
CelebA & 33.4 & .006 \\
\bottomrule
\end{tabular}
\caption{\small Inverse reconstruct consistency.}
\label{tab:inv-consistency}
\end{subfigure}
\begin{subfigure}{0.23\linewidth}
\centering
\small
\begin{tabular}{lcc}
\toprule
& \textbf{PSNR}  & \textbf{LPIPS}  \\
\midrule
MNIST & 32.4 & .006 \\
CelebA & 32.9 & .007 \\
\bottomrule
\end{tabular}
\caption{\small 100 vs.\ 1 step fidelity.}
\label{tab:step-fidelity}
\end{subfigure}
\caption{\small Quantitative comparisons.}
\label{fig:three-tables}
\vspace{-3mm}
\end{figure*}

\vspace{-2mm}
\subsection{Modular Style Transfer}
\vspace{-1mm}
\label{sec:app_style}

Style transfer has been widely studied since \cite{gatys2016image} 
proposed optimizing image pixels to match style and content statistics. 
\cite{johnson2016perceptual} introduced perceptual-loss training of feed-forward networks, 
making style transfer practical. 

\vspace{-3mm}
\paragraph{Linearizer formulation.}
We fix $g_x$ and $g_y$ and associate each style with a matrix $A_{\text{style}}$. The matrix $A_{\text{style}}$ is produced by a hypernetwork that takes a \emph{style index} as input (rather than time, as in the one-step setting). Then
\begin{equation}
f_{\text{style}}(x) = g_y^{-1}(A_{\text{style}} g_x(x)).
\end{equation}
This separates content representation from style, 
making styles modular and algebraically manipulable. 
In practice, we distill $A_{\text{style}}$ from a pretrained Johnson-style network.

\vspace{-3mm}
\paragraph{Style interpolation.}
Given two trained style operators, $A_{\text{style}}^{x}$ and $A_{\text{style}}^{y}$, we form an interpolated operator
$A_{\text{style}}^{(\alpha x + (1-\alpha) y)}$ that represents a linear interpolation between them. We evaluated
$\alpha \in \{0,\,0.35,\,0.40,\,0.45,\,0.50,\,0.55,\,0.60,\,0.65,\,1\}$,
\[
f_{\text{style}}^{(\alpha x + (1-\alpha) y)}(x)
= g_y^{-1} \left(A_{\text{style}}^{(\alpha x + (1-\alpha) y)}\, g_x(x)\right),
\]
with results shown in Figure~\ref{fig:style_results}. Rows 1--3 interpolate between \emph{mosaic}/\emph{candy}, \emph{rain-princess}/\emph{udnie}, and \emph{candy}/\emph{rain-princess}, respectively.
See App.~\ref{app:additional_results} for additional transfers.

\begin{figure*}[t]
\centering
\includegraphics[width=1\linewidth]{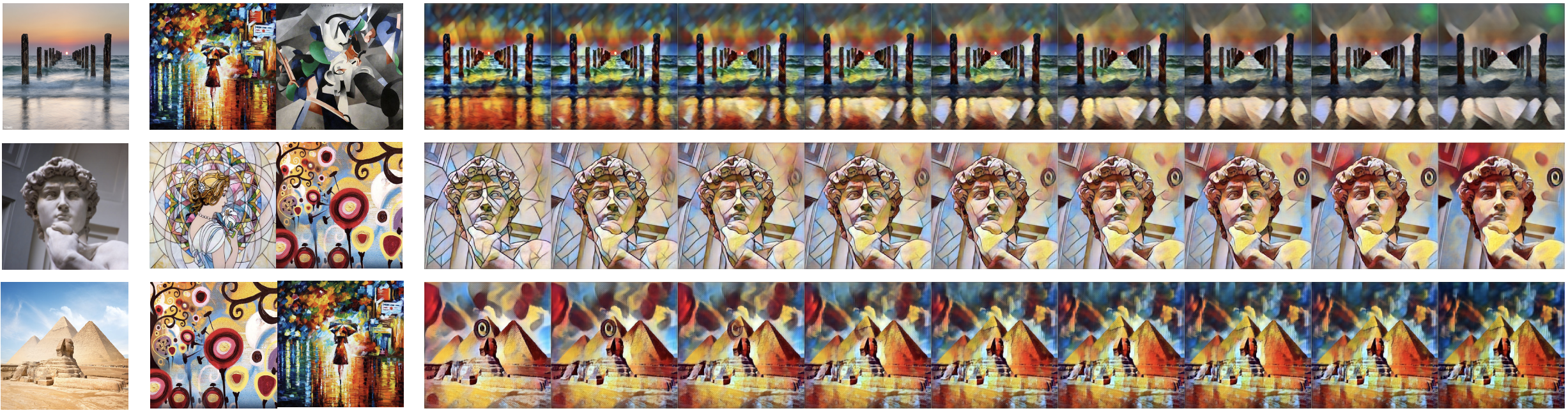}
\caption{\small \textbf{Style transfer examples.} Left: original image. Middle: style transfer using the left-side and right-side style images. Right: interpolation between the two styles.}
\label{fig:style_results}
\vspace{-3mm}
\end{figure*}

\vspace{-2mm}
\subsection{Linear Idempotent Generative Network}
\label{sec:app_idempotency}
\vspace{-1mm}

\begin{figure*}[t]
\centering
\begin{subfigure}{0.54\linewidth}
    \centering
    \includegraphics[width=\linewidth]{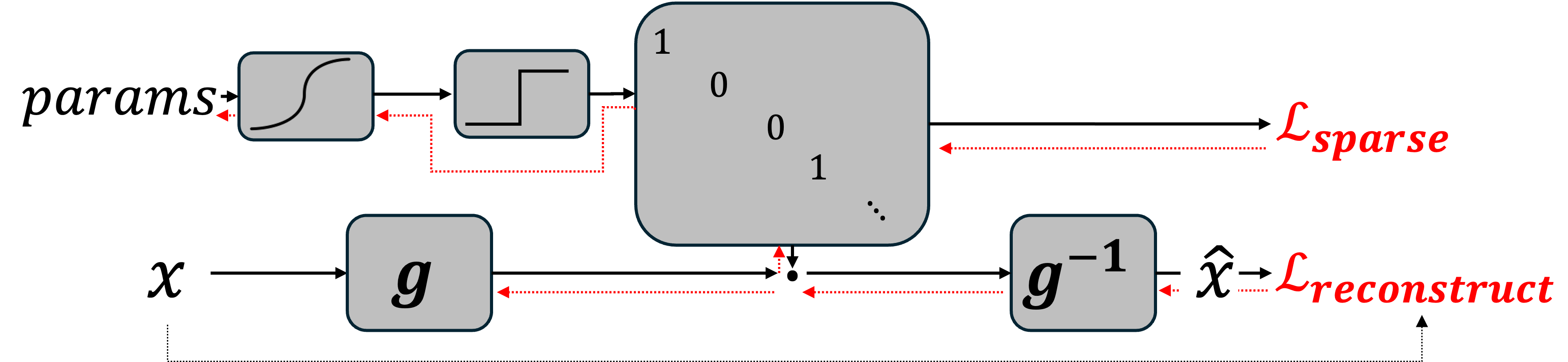}
    \label{fig:ign_meth}
\end{subfigure}\hfill
\begin{subfigure}{0.46\linewidth}
    \centering
    \includegraphics[width=\linewidth,trim=24.5mm 0mm 0mm 0mm,clip]{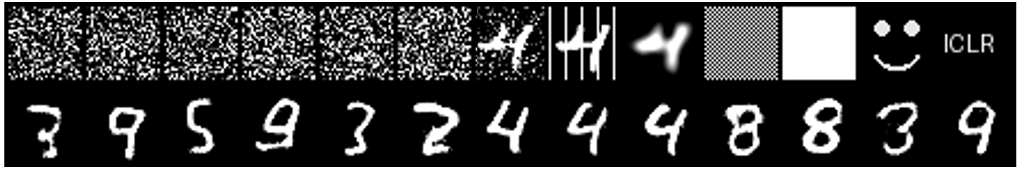}
    \label{fig:flow_samples_one}
\end{subfigure}\hfill
\vspace{-3mm}
\caption{\small \textbf{Left: IGN training diagram.} Black solid arrows denote the forward pass; the red dashed arrow shows backpropagation. 
    The parameter logits are passed through a sigmoid and then thresholded to form the binary projector $A$; during backprop, the straight-through estimator (STE) bypasses the threshold so gradients flow through the underlying probabilities. \textbf{Right: Projection results.} Our Linear IGN makes a global projector that projects any input to the target distribution. Top are inputs and bottom are matching outputs.}
\label{fig:ign}
\vspace{-6mm}
\end{figure*}

Idempotency, $f(f(x))=f(x)$, is a central concept in algebra and functional analysis. In machine learning, it has been used in Idempotent Generative Networks (IGNs) \citep{shocher2024ign}, to create a projective generative model. It was also used for robust test time training \citep{durasov2025it3}. Enforcing a network to be idempotent is tricky and is currently done by using sophisticated optimization methods such as \citep{jensen2025enforcing}. The result is only an approximately idempotent model over the training data.
We demonstrate enforcing accurate idempotency through architecture using our Linearizer. The key observation is that idempotency is preserved between the inner matrix $A$ and the full function $f$.

\begin{lemma}
The function $f$ is idempotent $\iff$ The matrix $A$ is idempotent.
\end{lemma}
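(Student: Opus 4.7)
The plan is to reduce the functional equation $f\circ f=f$ to a matrix equation $A^2=A$ by exploiting the invertibility of $g_x,g_y$. Since idempotency $f(f(x))=f(x)$ only makes sense when the input and output spaces coincide, I will first note that we are in the single-basis setting $g_x=g_y=g$, so $f:\mathcal{X}\to\mathcal{X}$ with $f(x)=g^{-1}(A\,g(x))$.

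First I would compute $f\circ f$ directly, using the fact that $g\circ g^{-1}=\mathrm{id}$ cancels in the middle, which is exactly the computation already carried out for the composition lemma with $f_1=f_2=f$:
\begin{equation}
(f\circ f)(x) \;=\; g^{-1}\!\bigl(A\,g(g^{-1}(A\,g(x)))\bigr) \;=\; g^{-1}\!\bigl(A^{2}\,g(x)\bigr).
\end{equation}
So $f\circ f$ is again a Linearizer whose inner operator is $A^2$, a fact already implicit in the iterated-power identity $\mathbb{L}_{\{g,g,A\}}^{\circ N}=\mathbb{L}_{\{g,g,A^{N}\}}$.

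Next I would unwrap the equivalence in both directions. For ($\Leftarrow$), if $A^2=A$ then $g^{-1}(A^2 g(x))=g^{-1}(A\,g(x))=f(x)$, immediately giving $f\circ f=f$. For ($\Rightarrow$), assume $f\circ f=f$. Then for every $x\in\mathcal{X}$,
\begin{equation}
g^{-1}\!\bigl(A^{2}\,g(x)\bigr) \;=\; g^{-1}\!\bigl(A\,g(x)\bigr).
\end{equation}
Applying $g$ to both sides (using injectivity of $g^{-1}$, equivalently that $g$ is well-defined on both outputs) yields $A^{2}\,g(x)=A\,g(x)$ for all $x$. Since $g$ is a bijection from $\mathcal{X}$ onto $\mathbb{R}^{N}$, the vector $g(x)$ ranges over \emph{all} of $\mathbb{R}^{N}$, so $A^{2}v=Av$ for every $v\in\mathbb{R}^{N}$, i.e.\ $A^{2}=A$.

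There is no real obstacle here; the only subtlety worth flagging is the surjectivity step, since without it one would only conclude that $A^2$ and $A$ agree on the image of $g$, not on all of $\mathbb{R}^N$. Invertibility of $g$ supplies exactly this, and I would make that explicit in one line. Optionally, I would remark that the equivalence is a concrete instance of the intuition that in the Linearizer framework $A$ is the representation of $f$ in the induced basis, so algebraic identities on $A$ transfer verbatim to $f$.
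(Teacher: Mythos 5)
Your proof is correct and follows essentially the same route as the paper: cancel $g\circ g^{-1}$ in the middle to get $(f\circ f)(x)=g^{-1}(A^2 g(x))$ and then transfer the identity $A^2=A$ back and forth through the invertible $g$. If anything, you are slightly more careful than the paper, whose chain of ``$\iff$''s silently uses the fact that $g$ is surjective onto $\mathbb{R}^N$ in the direction $f\circ f=f\Rightarrow A^2=A$; you make that step explicit.
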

\vspace{-5mm}
\begin{proof}
For all $x$,
\begin{equation*}
\begin{aligned}
A^2 = A 
&\;\iff\; g^{-1} \big(A^2 g(x)\big) = g^{-1} \big(A g(x)\big)  \\
&\;\iff\; \big\{\,g^{-1} \circ A \circ 
    \underbrace{(g\circ g^{-1})}_{\mathrm{Id}}\circ A \circ g\,\big\}(x) \\
&\quad\;= \big\{\,g^{-1} \circ A \circ g\,\big\}(x) \\
&\;\iff\; f(f(x)) = f(x) \\
&\quad\;\text{(by the definition of $f$)} \qedhere
\end{aligned}
\end{equation*}
\end{proof}

\vspace{-2mm}
\paragraph{Method.}
Figure~\ref{fig:ign} left shows the method. Recall that idempotent (projection) matrices have eigenvalues that are either $0$ or $1$. We assume a diagonalizable projection matrix $A=Q\Lambda Q^{-1}$ where $\Lambda$ is a diagonal matrix with entries $\{0,1\}$. The matrices $Q, Q^{-1}$ can be absorbed into $g, g^{-1}$ without loss of expressivity. So we can train a Linearizer $g^{-1}(\Lambda g(\cdot))$. In order to have $\Lambda$ that is binary yet differentiable, we use an estimation \citep{bengio2013ste} having underlying probabilities $P$ as parameters in $[0,1]$ and then use $A = \text{round}(P) + P - P.\text{detach}()$ where detach means stopping gradients. We forward propagate rounded values ($0$ or $1$) and back propagate continuous values.

Idempotency by architecture allows reducing the idempotent loss used in \cite{shocher2024ign}. We need the data to be fixed points with the tightest possible latent manifold. The losses are:
\begin{align}
    \mathcal{L}_{\text{rec}} &= \|f(x)-x\|^2\\
    \mathcal{L}_{\text{sparse}} &= \frac{1}{N} \text{Rank}(A) = \frac{1}{N}\sum_i^N \lambda_i
\end{align}
We further enforce regularization that encourages $g$ to preserve the norms:
\begin{align}
    \mathcal{L}_{\text{isometry}} &= \Bigl\lVert\lVert g(x)-g(0)\rVert^2-\lVert x\rVert^2\Bigr\rVert_1
\end{align}
This nudges $g$ towards a near-isometry around the data, thus resisting mapping far-apart inputs to close outputs. It mitigates collapse and improves stability. We apply it with a small weight ($0.001$).

\paragraph{From local to global projectors.}
As \citet{shocher2024ign} put it, \emph{``We view this work as a first step towards a global projector.''}
In practice, IGNs achieve idempotency only around the training data: near the source distribution or near the target distribution they are trained to reproduce.  
If you venture further away, then the outputs degenerate into arbitrary artifacts not related to the data distribution.  
In contrast, our Linearizer is idempotent by architecture.  
It does not need to be trained to approximate projection, it is one. Figure~\ref{fig:ign} right shows various inputs projected by our model onto the distribution.
Interestingly, there is not even a notion of a separate \emph{source distribution}: we never inject latent noise during training, and in a precise sense the entire ambient space serves as the source.  
This makes our construction a particularly unusual generative model and a natural next step toward the global projector envisioned in \citet{shocher2024ign}.

\vspace{-2mm}
\subsection{Additional Experiments}
\vspace{-1mm}

We provide more experiments of diverse types of data and tasks to empirically test the robustness of the Linearizer and its ability to learn various mappings.

\vspace{-3mm}
\paragraph{CelebFaces Attributes (CelebA).} We evaluate on the CelebA face-attribute classification task, which includes large pose variation, background clutter, diverse identities, and rich annotations (40 binary attributes such as \emph{Male}, \emph{Eyeglasses}, \emph{Blond Hair}). After training, our Linearizer attains a test accuracy of \textbf{87.6\%}; for reference, a ResNet-50 baseline reaches \textbf{86.1\%}. These results suggest that the Linearizer handles moderately complex, real-world visual variability.

\vspace{-3mm}
\paragraph{Distillation of nonlinear disentanglement models.} In this experiment, we distill the encoder of a pretrained disentanglement model into a Linearizer (using pretrained checkpoints). We consider VAE, FactorVAE, and $\beta$-VAE on the \textbf{dSprites} dataset, which provides ground-truth factors of variation. We assess quality with standard metrics: \textbf{FactorVAE}, \textbf{SAP}, and \textbf{DCI} (disentanglement, completeness, informativeness). Across models, the Linearizer closely matches the original encoders (rows prefixed with \emph{Linearizer-}) on these metrics (often slightly improving FactorVAE/SAP, e.g., BetaH: +0.10/+0.01), indicating that it can capture highly nonlinear transformations with no loss in disentanglement quality of the encoded representations. For the table with full results please see Appendix~\ref{app:additional-apps}.

\vspace{-3mm}
\paragraph{Weather Prediction.} To further explore the applicability of our framework, we conduct a small experiment in a new domain: regression of future steps. We use a standard weather-prediction benchmark where the model must predict the future given 96 past time steps. We compare our method with two well-known baselines tailored for regression, \textbf{Autoformer}~\cite{wu2021autoformer} and \textbf{PatchTST}~\cite{nie2022time}. Briefly, although our method is generic, it shows competitive results, surpassing the highly nonlinear Autoformer and being comparative (e.g., gaps of $\sim$0.008--0.034 MSE depending on horizon) with the strong PatchTST baseline. For the table with full results please see Appendix~\ref{app:additional-apps}.

\vspace{-2mm}

\vspace{-2mm}
\section{Limitations}
\label{sec:limitations}
\vspace{-1mm}

While the Linearizer introduces a principled framework for exact linearization of neural maps, several limitations remain.  
First, invertible networks are inherently more challenging to train than standard architectures, often requiring careful design.  
Second, this work represents a broad and general first step:
our applications demonstrate feasibility, but none are yet scaled to state-of-the-art benchmarks. Finally, the precise expressivity of the Linearizer remains an open theoretical question.

\vspace{-2mm}
\section{Related Work}
\vspace{-1mm}

\paragraph{Linearization in Dynamical Systems.}
Koopman theory linearizes nonlinear dynamics via observables, yielding $z_{t+1}=K z_t$ (discrete) or $\dot z = A z$ (continuous) \citep{mezic2005spectral}. Data-driven realizations such as DMD and EDMD construct finite-dimensional approximations of this infinite-dimensional operator on a chosen dictionary of observables, which typically produces a square matrix because the dictionary is mapped into itself, although variants with different feature sets exist \citep{schmid2010dynamic,lusch2018deep,mardt2018vampnets}. Linear algebraic analyses such as SVD, eigendecomposition, and taking powers are standard in these Koopman approximations. Our work addresses a different object: an arbitrary learned mapping $f:\mathcal{X}\to\mathcal{Y}$. We learn invertible coordinate maps $g_x$ and $g_y$ and a finite matrix $A$ such that
$g_y \circ f \circ g_x^{-1} = A$, which gives the exact finite-dimensional linearity of $f$ between induced spaces. This formulation allows for distinct input and output coordinates, allows $A$ to be rectangular when $\dim\mathcal{X}\neq\dim\mathcal{Y}$, and transfers linear-algebraic structure in $A$ directly to $f$ in a controlled way. One can compute $\mathrm{SVD}(f)$ via $\mathrm{SVD}(A)$, invert analytically with $A^{+}$, impose spectral constraints (e.g., idempotency) via $\sigma(A)$, and more.

\vspace{-3mm}
\paragraph{Neural Tangent Kernel.}
The NTK framework shows that infinitely wide networks trained with small steps evolve linearly in parameter space \citep{jacot2018neural}. However, the resulting model remains nonlinear in input-output mapping. In comparison, our contribution achieves input-output linearity in a learned basis for any network width, independent of training dynamics.

\vspace{-3mm}
\paragraph{One-Step Diffusion Distillation.}
Reducing sampling cost in diffusion models often uses student distillation: Progressive Distillation \citep{salimans2022progressive}, Consistency Models \citep{song2023consistency}, Distribution Matching Distillation (DMD) and f-distillation fall into this category. A recent work, the Koopman Distillation Model (KDM) \citep{berman2025kdm}, uses Koopman-based encoding to distill a diffusion model into a one-step generator, achieving strong FID improvements through offline distillation. The Koopman Distillation Model (KDM) distills a pre-trained diffusion teacher into an encoder-linear-decoder for diffusion only. We train from scratch, enforce exact finite-dimensional linearity by construction, and target multiple use cases including composition, inversion, idempotency, and continuous evolution. Unlike some invertible Koopman setups that force a shared map, KDM employs distinct encoder components and a decoder and does not enforce $g_x = g_y$.

\vspace{-3mm}
\paragraph{Invertible Neural Networks.}
Invertible models like NICE \citep{dinh2014nice}, RealNVP \citep{dinh2017realnvp} and Glow \citep{kingma2018glow} use invertible transforms for density modeling and generative sampling. In this work, we make use of such invertible neural networks to impose a bijective change of coordinates that allows linearity (Proposition~\ref{lem:linearity} only holds given this invertibility of $g_x$, $g_y$) rather than modeling distribution.

\vspace{-3mm}
\paragraph{Group-theoretic and Equivariant Representations.}
Let $G$ act on $\mathcal{X}$ via $(g,x)\mapsto g\cdot x$. Many works learn an encoder $e:\mathcal{X}\to V$ and a linear representation $\rho:G\to \mathrm{GL}(V)$ such that the equivariance constraint
$e(g\cdot x)=\rho(g)\,e(x)$
holds approximately, often with a decoder $d$ that reconstructs $x$ or enforces $d(\rho(g)z)\approx g\cdot d(z)$ \citep{quessard2020learning}. The objective is to make the \emph{action} of a symmetry group linear in latent coordinates. In contrast, we target a different object: an \emph{arbitrary learned map} $f:\mathcal{X}\to\mathcal{Y}$. We construct invertible $g_x,g_y$ and a finite matrix $A$ so that
$g_y\circ f\circ g_x^{-1}=A$,
which yields \emph{exact} finite-dimensional linearity of $f$ between induced spaces and enables direct linear algebra on $f$ via $A$.

\vspace{-3mm}
\paragraph{Manifold Flattening and Diffeomorphic Autoencoders.}
Manifold-flattening methods assume data lie on a manifold $M\subset\mathbb{R}^N$ and learn or construct a map $\Phi:\mathbb{R}^N\to\mathbb{R}^k$ so that $\Phi(M)$ is close to a linear subspace $L$, often with approximate isometry on $M$ \citep{psenka2024flatnet}. Diffeomorphic autoencoders parameterize deformations $\varphi\in\mathrm{Diff}(\Omega)$ with a latent $z$ in a Lie algebra and use a decoder to warp a template, with variants using a log map to linearize the deformation composition \citep{lddmmAE}. These approaches \emph{flatten data geometry or deformation fields}, whereas our Linearizer flattens a mapping function rather than a manifold or deformation group.

\vspace{-3mm}
\paragraph{Deep Linear Networks.}
Networks composed solely of linear layers are linear in the Euclidean basis and are primarily used to study optimization paths \citep{saxe2013exact,arora2018optimization,cohen2016expressive}. They lack expressive power in standard coordinates. In contrast, our Linearizer is expressive thanks to $g_x$ and $g_y$ that are nonlinear over the standard vector spaces, yet maintains exact linearity in the induced coordinate system.

\vspace{-3mm}
\paragraph{Normalizing Flows.}
Normalizing flows \cite{rezende2015flows} use invertible networks to map Gaussians to complex data distributions, preserving density via the change-of-variables formula. Linearizers use the same building block, an invertible network, but instead of transporting measures, they transport algebraic structure.  In this sense, Linearizers may be viewed as the ``linear-algebraic analogue'' of normalizing flows: both frameworks exploit invertible transformations to pull back complex objects into domains where they become simple and tractable.

\vspace{-2mm}
\section{Conclusion}
\label{sec:conclusion}
\vspace{-1mm}

We introduced the Linearizer, a framework that learns invertible coordinate transformations such that neural networks become exact linear operators in the induced space.  
This yields new applications in one-step flow matching, modular style transfer, and architectural enforcement of idempotency.  Looking ahead, several directions stand out.  
First, scaling one-step flow matching and IGN to larger datasets and higher resolutions could provide competitive generative models with unprecedented efficiency.  
Second, the matrix structure of the Linearizer opens the door to modeling motion dynamics: by exploiting matrix exponentials, one could simulate continuous-time evolution directly, 
extending the approach beyond generation to physical and temporal modeling.  
More broadly, characterizing the theoretical limits of Linearizer expressivity, and integrating it with other operator-learning frameworks, remains an exciting avenue for future research.

\clearpage

\section*{Impact Statement} This paper presents a fundamental architectural framework intended to advance the theoretical understanding and computational efficiency of neural networks. A primary application of our work is accelerating generative diffusion models via one-step sampling. While this offers significant benefits in terms of reduced computational cost and energy consumption during inference, it shares the general societal risks associated with generative AI, such as the potential for creating misleading synthetic content. We believe our specific method does not introduce new ethical concerns beyond those already established in the field of generative modeling.


\bibliography{iclr2026_conference}  

@inproceedings{arora2018optimization,
  title={On the Optimization of Deep Networks: Implicit Acceleration by Overparameterization},
  author={Arora, Sanjeev and Cohen, Nadav and Hazan, Elad},
  booktitle={Proceedings of the 35th International Conference on Machine Learning (ICML)},
  pages={244--253},
  year={2018},
  organization={PMLR}
}

@inproceedings{bengio2013ste,
  author={Bengio, Yoshua and Léonard, Nicolas and Courville, Aaron},
  title={Estimating or Propagating Gradients Through Stochastic Neurons},
  booktitle={Proceedings of the 2nd International Conference on Learning Representations (ICLR, Workshop Track)},
  year={2013}
}

@inproceedings{berman2025kdm,
  author={Berman, Nimrod and Naiman, Ilan and Eliasof, Moshe and Zisling, Hedi and Azencot, Omri},
  title={One-Step Offline Distillation of Diffusion-based Models via Koopman Modeling},
  booktitle={Proceedings of the 42nd International Conference on Machine Learning (ICML)},
  year={2025},
  organization={PMLR}
}

@inproceedings{lddmmAE,
  author={Bône, Alexandre and Louis, Maxime and Colliot, Olivier and Durrleman, Stanley},
  title={Learning Low-Dimensional Representations of Shape Data Sets with Diffeomorphic Autoencoders},
  booktitle={Information Processing in Medical Imaging (IPMI)},
  series={Lecture Notes in Computer Science},
  volume={11492},
  pages={195--207},
  publisher={Springer},
  year={2019}
}

@inproceedings{cohen2016expressive,
  author={Cohen, Nadav and Sharir, Or and Shashua, Amnon},
  title={On the Expressive Power of Deep Learning: A Tensor Analysis},
  booktitle={Proceedings of the 29th Annual Conference on Learning Theory (COLT)},
  year={2016}
}

@inproceedings{dinh2014nice,
  author={Dinh, Laurent and Krueger, David and Bengio, Yoshua},
  title={NICE: Non-linear Independent Components Estimation},
  booktitle={Proceedings of the 2nd International Conference on Learning Representations (ICLR, Workshop Track)},
  year={2015}
}

@inproceedings{dinh2017realnvp,
  author={Dinh, Laurent and Sohl-Dickstein, Jascha and Bengio, Samy},
  title={Density Estimation using Real NVP},
  booktitle={Proceedings of the 5th International Conference on Learning Representations (ICLR)},
  year={2017}
}

@inproceedings{gatys2016image,
  author={Gatys, Leon A. and Ecker, Alexander S. and Bethge, Matthias},
  title={Image Style Transfer Using Convolutional Neural Networks},
  booktitle={Proceedings of the IEEE Conference on Computer Vision and Pattern Recognition (CVPR)},
  pages={2414--2423},
  year={2016}
}

@inproceedings{jacot2018neural,
  author={Jacot, Arthur and Gabriel, Franck and Hongler, Clément},
  title={Neural Tangent Kernel: Convergence and Generalization in Neural Networks},
  booktitle={Advances in Neural Information Processing Systems (NeurIPS)},
  volume={31},
  year={2018}
}

@inproceedings{jensen2025enforcing,
  author={Jensen, Nikolaj Banke and Vicary, Jamie},
  title={Enforcing Idempotency in Neural Networks},
  booktitle={Proceedings of the 42nd International Conference on Machine Learning (ICML)},
  year={2025}
}

@inproceedings{johnson2016perceptual,
  author={Johnson, Justin and Alahi, Alexandre and Fei-Fei, Li},
  title={Perceptual Losses for Real-Time Style Transfer and Super-Resolution},
  booktitle={European Conference on Computer Vision (ECCV)},
  pages={694--711},
  year={2016}
}

@inproceedings{kingma2018glow,
  author={Kingma, Diederik P. and Dhariwal, Prafulla},
  title={Glow: Generative Flow with Invertible 1x1 Convolutions},
  booktitle={Advances in Neural Information Processing Systems (NeurIPS)},
  volume={31},
  year={2018}
}

@inproceedings{lipman2023flowmatching,
  author={Lipman, Yaron and Chen, Ricky T. Q. and Ben-Hamu, Heli and Nickel, Maximilian and Le, Matt},
  title={Flow Matching for Generative Modeling},
  booktitle={International Conference on Learning Representations (ICLR)},
  year={2023}
}

@article{lusch2018deep,
  author={Lusch, Bethany and Kutz, J. Nathan and Brunton, Steven L.},
  title={Deep Learning for Universal Linear Embeddings of Nonlinear Dynamics},
  journal={Nature Communications},
  volume={9},
  pages={4950},
  year={2018}
}

@article{mardt2018vampnets,
  author={Mardt, Andreas and Pasquali, Luca and Wu, Hao and Noé, Frank},
  title={VAMPnets for Deep Learning of Molecular Kinetics},
  journal={Nature Communications},
  volume={9},
  pages={5},
  year={2018}
}

@article{mezic2005spectral,
  author={Mezić, Igor},
  title={Spectral Properties of Dynamical Systems, Model Reduction and Decompositions},
  journal={Nonlinear Dynamics},
  volume={41},
  pages={309--325},
  year={2005}
}

@article{psenka2024flatnet,
  author={Psenka, Michael and Pai, Druv and Raman, Vishal and Sastry, Shankar and Ma, Yi},
  title={Representation Learning via Manifold Flattening and Reconstruction},
  journal={Journal of Machine Learning Research},
  year={2024}
}

@inproceedings{quessard2020learning,
  author={Quessard, Robin and Barrett, Thomas D. and Clements, William R.},
  title={Learning Group Structure and Disentangled Representations of Dynamical Environments},
  booktitle={Advances in Neural Information Processing Systems (NeurIPS)},
  volume={33},
  year={2020}
}

@inproceedings{salimans2022progressive,
  author={Salimans, Tim and Ho, Jonathan},
  title={Progressive Distillation for Fast Sampling of Diffusion Models},
  booktitle={Advances in Neural Information Processing Systems (NeurIPS)},
  volume={35},
  year={2022}
}

@inproceedings{saxe2013exact,
  author={Saxe, Andrew M. and McClelland, James L. and Ganguli, Surya},
  title={Exact Solutions to the Nonlinear Dynamics of Learning in Deep Linear Neural Networks},
  booktitle={International Conference on Learning Representations (ICLR)},
  year={2014}
}

@article{schmid2010dynamic,
  author={Schmid, Peter J.},
  title={Dynamic Mode Decomposition of Numerical and Experimental Data},
  journal={Journal of Fluid Mechanics},
  volume={656},
  pages={5--28},
  year={2010}
}

@inproceedings{shocher2024ign,
  author={Shocher, Assaf and Dravid, Amil and Gandelsman, Yossi and Mosseri, Inbar and Rubinstein, Michael and Efros, Alexei A.},
  title={Idempotent Generative Network},
  booktitle={International Conference on Learning Representations (ICLR)},
  year={2024}
}

@inproceedings{song2021ddim,
  author={Song, Jiaming and Meng, Chenlin and Ermon, Stefano},
  title={Denoising Diffusion Implicit Models},
  booktitle={International Conference on Learning Representations (ICLR)},
  year={2021}
}

@inproceedings{song2023consistency,
  author={Song, Yang and Dhariwal, Prafulla and Chen, Mark and Sutskever, Ilya},
  title={Consistency Models},
  booktitle={Proceedings of the 40th International Conference on Machine Learning (ICML)},
  year={2023}
}

@inproceedings{song2021score,
  author={Song, Yang and Sohl-Dickstein, Jascha and Kingma, Diederik P. and Kumar, Abhishek and Ermon, Stefano and Poole, Ben},
  title={Score-Based Generative Modeling through Stochastic Differential Equations},
  booktitle={International Conference on Learning Representations (ICLR)},
  year={2021}
}

@book{strang2022introduction,
  title={Introduction to Linear Algebra},
  author={Strang, Gilbert},
  year={2022},
  publisher={SIAM}
}

@book{strogatz2024nonlinear,
  title={Nonlinear Dynamics and Chaos: With Applications to Physics, Biology, Chemistry, and Engineering},
  author={Strogatz, Steven H.},
  year={2024},
  publisher={Chapman and Hall/CRC}
}

@article{hornik1989multilayer,
  title={Multilayer Feedforward Networks are Universal Approximators},
  author={Hornik, Kurt and Stinchcombe, Maxwell and White, Halbert},
  journal={Neural Networks},
  volume={2},
  number={5},
  pages={359--366},
  year={1989}
}

@inproceedings{kingma2014vae,
  author={Kingma, Diederik P. and Welling, Max},
  title={Auto-Encoding Variational Bayes},
  booktitle={Proceedings of the 2nd International Conference on Learning Representations (ICLR)},
  year={2014}
}

@inproceedings{dhariwal2021diffusion,
  title={Diffusion Models Beat GANs on Image Synthesis},
  author={Dhariwal, Prafulla and Nichol, Alex},
  booktitle={Advances in Neural Information Processing Systems (NeurIPS)},
  volume={34},
  year={2021}
}

@article{huberman2024diffusion,
  title={Diffusion Inversion: A Universal Technique for Diffusion Editing and Interpolation},
  author={Huberman-Spiegelglas, Yarden and Bagon, Shai and Irani, Michal},
  journal={Transactions on Machine Learning Research (TMLR)},
  year={2024}
}

@inproceedings{durasov2025it3,
  title     = {{IT}\textsuperscript{3}: Idempotent Test-Time Training},
  author    = {Durasov, Nikita and Shocher, Assaf and Oner, Doruk and Chechik, Gal and Efros, Alexei A. and Fua, Pascal},
  booktitle = {Proceedings of the 42nd International Conference on Machine Learning (ICML)},
  year      = {2025},
  organization = {PMLR}
}

@article{penrose1955generalized,
  title={A generalized inverse for matrices},
  author={Penrose, Roger},
  journal={Mathematical Proceedings of the Cambridge Philosophical Society},
  volume={51},
  number={3},
  pages={406--413},
  year={1955},
  publisher={Cambridge University Press}
}

@inproceedings{liu2015faceattributes,
  title = {Deep Learning Face Attributes in the Wild},
  author = {Liu, Ziwei and Luo, Ping and Wang, Xiaogang and Tang, Xiaoou},
  booktitle = {Proceedings of International Conference on Computer Vision (ICCV)},
  month = {December},
  year = {2015} 
}

@article{lecun1998mnist,
  title     = {Gradient-based learning applied to document recognition},
  author    = {LeCun, Yann and Bottou, L{\'e}on and Bengio, Yoshua and Haffner, Patrick},
  journal   = {Proceedings of the IEEE},
  volume    = {86},
  number    = {11},
  pages     = {2278--2324},
  year      = {1998},
  publisher = {IEEE}
}

@inproceedings{rezende2015flows,
  title={Variational Inference with Normalizing Flows},
  author={Rezende, Danilo Jimenez and Mohamed, Shakir},
  booktitle={Proceedings of the 32nd International Conference on Machine Learning (ICML)},
  pages={1530--1538},
  year={2015},
  organization={PMLR}
}

@inproceedings{ronneberger2015u,
  title={U-net: Convolutional networks for biomedical image segmentation},
  author={Ronneberger, Olaf and Fischer, Philipp and Brox, Thomas},
  booktitle={International Conference on Medical image computing and computer-assisted intervention},
  pages={234--241},
  year={2015},
  organization={Springer}
}

@article{karras2022elucidating,
  title={Elucidating the design space of diffusion-based generative models},
  author={Karras, Tero and Aittala, Miika and Aila, Timo and Laine, Samuli},
  journal={Advances in neural information processing systems},
  volume={35},
  pages={26565--26577},
  year={2022}
}

@inproceedings{zhang2018unreasonable,
  title={The unreasonable effectiveness of deep features as a perceptual metric},
  author={Zhang, Richard and Isola, Phillip and Efros, Alexei A and Shechtman, Eli and Wang, Oliver},
  booktitle={Proceedings of the IEEE conference on computer vision and pattern recognition},
  pages={586--595},
  year={2018}
}

@article{runge1895numerische,
  title={{\"U}ber die numerische Aufl{\"o}sung von Differentialgleichungen},
  author={Runge, Carl},
  journal={Mathematische Annalen},
  volume={46},
  number={2},
  pages={167--178},
  year={1895},
  publisher={Springer}
}

@book{kutta1901beitrag,
  title={Beitrag zur n{\"a}herungsweisen Integration totaler Differentialgleichungen},
  author={Kutta, Wilhelm},
  year={1901},
  publisher={Teubner}
}

@article{wu2021autoformer,
  title={Autoformer: Decomposition transformers with auto-correlation for long-term series forecasting},
  author={Wu, Haixu and Xu, Jiehui and Wang, Jianmin and Long, Mingsheng},
  journal={Advances in neural information processing systems},
  volume={34},
  pages={22419--22430},
  year={2021}
}

@article{nie2022time,
  title={A Time Series is Worth 64Words: Long-term Forecasting with Transformers},
  author={Nie, Y},
  journal={arXiv preprint arXiv:2211.14730},
  year={2022}
}

@article{ho2020denoising,
  title={Denoising diffusion probabilistic models},
  author={Ho, Jonathan and Jain, Ajay and Abbeel, Pieter},
  journal={Advances in neural information processing systems},
  volume={33},
  pages={6840--6851},
  year={2020}
}

@article{gao2024diffusion,
  title={Diffusion meets flow matching: Two sides of the same coin. 2024},
  author={Gao, Ruiqi and Hoogeboom, Emiel and Heek, Jonathan and De Bortoli, Valentin and Murphy, Kevin P and Salimans, Tim},
  journal={URL https://diffusionflow. github. io},
  year={2024}
}

@book{maclane1999algebra,
  title={Algebra},
  author={Mac Lane, Saunders and Birkhoff, Garrett},
  year={1999},
  publisher={AMS Chelsea Publishing},
  edition={3rd}
}
\bibliographystyle{icml2026}

\newpage
\appendix
\onecolumn

\section{Proof of Proposition~\ref{lem:vector_space} (Valid Vector Spaces)}\label{app:axioms}

\begin{proof}
We verify that $(V,\oplus_g,\odot_g)$ is a vector space over $\mathbb{R}$ by transporting each axiom via $g$ and $g^{-1}$, using only the definitions
\[
u\oplus_g v := g^{-1}(g(u)+g(v)),\qquad a\odot_g u := g^{-1}(a\,g(u)).
\]
For $u,v,w\in V$ and $a,b\in\mathbb{R}$:

\begin{enumerate}
\item \textbf{Closure}
\begin{equation}
u\oplus_g v \;=\; g^{-1} \big(g(u)+g(v)\big) \;\in\; V.
\end{equation}

\item \textbf{Associativity}
\begin{equation}
\begin{aligned}
(u\oplus_g v)\oplus_g w
&= g^{-1} \big(g(u\oplus_g v)+g(w)\big) \\
&= g^{-1} \big((g(u)+g(v))+g(w)\big) \\
&= g^{-1} \big(g(u)+(g(v)+g(w))\big) \\
&= g^{-1} \big(g(u)+g(v\oplus_g w)\big) \\
&= u\oplus_g (v\oplus_g w).
\end{aligned}
\end{equation}

\item \textbf{Commutativity}
\begin{equation}
\begin{aligned}
u\oplus_g v
&= g^{-1} \big(g(u)+g(v)\big) \\
&= g^{-1} \big(g(v)+g(u)\big) \\
&= v\oplus_g u.
\end{aligned}
\end{equation}

\item \textbf{Additive identity} (with $0_V:=g^{-1}(0)$)
\begin{equation}
\begin{aligned}
u\oplus_g 0_V
&= g^{-1} \big(g(u)+g(0_V)\big)
= g^{-1} \big(g(u)+0\big)
= u.
\end{aligned}
\end{equation}

\item \textbf{Additive inverse} (with $(-u):=g^{-1}(-g(u))$)
\begin{equation}
\begin{aligned}
u\oplus_g (-u)
&= g^{-1} \big(g(u)+g(-u)\big)
= g^{-1} \big(g(u)-g(u)\big)
= g^{-1}(0)
= 0_V.
\end{aligned}
\end{equation}

\item \textbf{Compatibility of scalar multiplication}
\begin{equation}
\begin{aligned}
a\odot_g (b\odot_g u)
&= g^{-1} \big(a\,g(b\odot_g u)\big)
= g^{-1} \big(a\,(b\,g(u))\big) \\
&= g^{-1} \big((ab)\,g(u)\big)
= (ab)\odot_g u.
\end{aligned}
\end{equation}

\item \textbf{Scalar identity}
\begin{equation}
\begin{aligned}
1\odot_g u
&= g^{-1} \big(1\cdot g(u)\big)
= g^{-1} \big(g(u)\big)
= u.
\end{aligned}
\end{equation}

\item \textbf{Distributivity over vector addition}
\begin{equation}
\begin{aligned}
a\odot_g (u\oplus_g v)
&= g^{-1} \big(a\,g(u\oplus_g v)\big)
= g^{-1} \big(a\,(g(u)+g(v))\big) \\
&= g^{-1} \big(a\,g(u)+a\,g(v)\big)
= g^{-1} \big(a\,g(u)\big)\ \oplus_g\ g^{-1} \big(a\,g(v)\big) \\
&= (a\odot_g u)\ \oplus_g\ (a\odot_g v).
\end{aligned}
\end{equation}

\item \textbf{Distributivity over scalar addition}
\begin{equation}
\begin{aligned}
(a+b)\odot_g u
&= g^{-1} \big((a+b)\,g(u)\big)
= g^{-1} \big(a\,g(u)+b\,g(u)\big) \\
&= g^{-1} \big(a\,g(u)\big)\ \oplus_g\ g^{-1} \big(b\,g(u)\big)
= (a\odot_g u)\ \oplus_g\ (b\odot_g u).
\end{aligned}
\end{equation}
\end{enumerate}
\end{proof}

\section{Proof of Proposition~\ref{lem:svd} (SVD of a Linearizer)}
\label{app:svd}
Let $f:\mathcal{X}\to\mathcal{Y}$ be a Linearizer
\begin{equation}
f(x)=g_y^{-1} \big(A\,g_x(x)\big),
\end{equation}
where $g_x:\mathcal{X} \to \mathbb{R}^N$ and $g_y:\mathcal{Y} \to \mathbb{R}^M$ are invertible and $A\in\mathbb{R}^{M\times N}$. Equip $\mathcal{X}$ and $\mathcal{Y}$ with the induced inner products
\begin{equation}
\langle u,v\rangle_{g_x} := \langle g_x(u),g_x(v)\rangle_{\mathbb{R}^N},
\qquad
\langle p,q\rangle_{g_y} := \langle g_y(p),g_y(q)\rangle_{\mathbb{R}^M}.
\end{equation}
Let the (Euclidean) SVD of $A$ be $A=U\Sigma V^\top$, where $U=[u_1,\dots,u_M]\in\mathbb{R}^{M\times M}$, $V=[v_1,\dots,v_N]\in\mathbb{R}^{N\times N}$ are orthogonal and $\Sigma=\mathrm{diag}(\sigma_1,\dots,\sigma_r,0,\dots)$ with $\sigma_1\ge\dots\ge\sigma_r>0$. Define
\begin{equation}
\tilde{u}_i := g_y^{-1}(u_i)\in\mathcal{Y}, \qquad \tilde{v}_i := g_x^{-1}(v_i)\in\mathcal{X}.
\end{equation}
Then $\{\tilde{v}_i\}$ and $\{\tilde{u}_i\}$ are orthonormal sets in $(\mathcal{X},\langle\cdot,\cdot\rangle_{g_x})$ and $(\mathcal{Y},\langle\cdot,\cdot\rangle_{g_y})$, respectively, and for each $i\le r$,
\begin{equation}
f(\tilde{v}_i) = \sigma_i \odot_{g_y} \tilde{u}_i,
\qquad\text{and}\qquad
f^\ast(\tilde{u}_i) = \sigma_i \odot_{g_x} \tilde{v}_i,
\end{equation}
where $f^\ast$ is the adjoint with respect to the induced inner products and $a\odot_{g}$ denotes the induced scalar multiplication. Hence $(\{\tilde{u}_i\}_{i=1}^r,\{\sigma_i\}_{i=1}^r,\{\tilde{v}_i\}_{i=1}^r)$ is an SVD of $f$ between the induced Hilbert spaces.

\begin{proof}
\textbf{Orthonormality.} For $i,j$,
\begin{equation}
\langle \tilde{v}_i,\tilde{v}_j\rangle_{g_x}
= \langle g_x(\tilde{v}_i),g_x(\tilde{v}_j)\rangle_{\mathbb{R}^N}
= \langle v_i,v_j\rangle_{\mathbb{R}^N}
= \delta_{ij},
\end{equation}
and similarly $\langle \tilde{u}_i,\tilde{u}_j\rangle_{g_y}=\delta_{ij}$. Thus $g_x$ and $g_y$ are isometric isomorphisms from the induced spaces to Euclidean space.

\textbf{Action on right singular vectors.} Using $A v_i=\sigma_i u_i$,
\begin{align}
f(\tilde{v}_i)
&= g_y^{-1} \big(A\,g_x(\tilde{v}_i)\big)
= g_y^{-1} \big(A v_i\big)
= g_y^{-1} \big(\sigma_i u_i\big)
= \sigma_i \odot_{g_y} \tilde{u}_i.
\end{align}

\textbf{Adjoint and action on left singular vectors.} The adjoint $f^\ast:\mathcal{Y}\to\mathcal{X}$ with respect to the induced inner products is characterized by
\begin{equation}
\langle f(x),y\rangle_{g_y} = \langle x,f^\ast(y)\rangle_{g_x}\quad\text{for all }x\in\mathcal{X},\ y\in\mathcal{Y}.
\end{equation}
Transporting through $g_x,g_y$ and using Euclidean adjoints shows that
\begin{equation}
f^\ast(y) = g_x^{-1} \big(A^\top g_y(y)\big).
\end{equation}
Therefore, since $A^\top u_i=\sigma_i v_i$,
\begin{align}
f^\ast(\tilde{u}_i)
&= g_x^{-1} \big(A^\top g_y(\tilde{u}_i)\big)
= g_x^{-1} \big(A^\top u_i\big)
= g_x^{-1} \big(\sigma_i v_i\big)
= \sigma_i \odot_{g_x} \tilde{v}_i.
\end{align}

\textbf{Conclusion.} The triples $\big(\tilde{u}_i,\sigma_i,\tilde{v}_i\big)$ satisfy the defining relations of singular triplets for $f$ between the induced inner product spaces, with $\{\tilde{v}_i\}$ and $\{\tilde{u}_i\}$ orthonormal bases of the right and left singular subspaces. For zero singular values, the same construction holds with images mapped to the null spaces as usual.
\end{proof}

\section{Proof of Proposition~\ref{lem:hilbert} (Hilbert Space)}\label{app:hilbert}

Let $g:V \to \mathbb{R}^n$ be a bijection and endow $V$ with the induced vector space operations $(\oplus_g,\odot_g)$ and inner product
\begin{equation}
\langle u,v\rangle_g \;:=\; \langle g(u),\,g(v)\rangle_{\mathbb{R}^n}.
\end{equation}
Then $(V,\langle\cdot,\cdot\rangle_g)$ is a Hilbert space.

\begin{proof}
By Proposition~\ref{lem:vector_space}, $(V,\oplus_g,\odot_g)$ is a vector space and $g$ is a vector-space isomorphism from $(V,\oplus_g,\odot_g)$ onto $(\mathbb{R}^n,+,\cdot)$. The form $\langle\cdot,\cdot\rangle_g$ is an inner product because it is the pullback of the Euclidean inner product: bilinearity, symmetry, and positive definiteness follow immediately from injectivity of $g$ and the corresponding properties in $\mathbb{R}^n$.

Let $\|\cdot\|_g$ be the norm induced by $\langle\cdot,\cdot\rangle_g$. For any $u,v\in V$,
\begin{equation}
\|u-v\|_g \;=\; \|g(u)-g(v)\|_2,
\end{equation}
so $g$ is an isometry from $(V,\|\cdot\|_g)$ onto $(\mathbb{R}^n,\|\cdot\|_2)$. Hence a sequence $\{u_k\}\subset V$ is Cauchy in $\|\cdot\|_g$ iff $\{g(u_k)\}$ is Cauchy in $\mathbb{R}^n$. Since $\mathbb{R}^n$ is complete, $\{g(u_k)\}$ converges to some $y\in\mathbb{R}^n$. By surjectivity of $g$, there exists $u^\star\in V$ with $g(u^\star)=y$, and then $\|u_k-u^\star\|_g=\|g(u_k)-y\|_2\to 0$. Thus $(V,\|\cdot\|_g)$ is complete. Therefore $(V,\langle\cdot,\cdot\rangle_g)$ is a Hilbert space.
\end{proof}

\section{Proof of Lemma~\ref{lem:lin_pinv} (Pseudo-Inverse of a Linearizer)} \label{app:pinv} 

\paragraph{Pseudoinverse of a Linearizer.}
Let $f:\mathcal{X}\to\mathcal{Y}$ be a Linearizer $f(x)=g_y^{-1}\!\big(A\,g_x(x)\big)$ under the induced inner products
$\langle u,v\rangle_{g_x}:=\langle g_x(u),g_x(v)\rangle$ on $\mathcal{X}$ and
$\langle u,v\rangle_{g_y}:=\langle g_y(u),g_y(v)\rangle$ on $\mathcal{Y}$.
Write $f^{\ast}(y)=g_x^{-1}\!\big(A^{\mathsf T} g_y(y)\big)$ for the adjoint and let $A^\dagger$ be the (Euclidean) Moore--Penrose pseudoinverse of $A$ \citep{penrose1955generalized}.

The Moore--Penrose pseudoinverse of $f$ with respect to the induced inner products is
\begin{equation}
f^\dagger \;=\; g_x^{-1}\circ A^\dagger \circ g_y.
\end{equation}

\begin{proof}
Following \citet{penrose1955generalized}, the Moore--Penrose pseudoinverse of a linear operator is uniquely characterized as the map satisfying four algebraic conditions (the Penrose equations). To establish that $f^\dagger = g_x^{-1}\!\circ A^\dagger \circ g_y$ is indeed the pseudoinverse of $f$, it therefore suffices to verify these four identities explicitly:
\begin{enumerate}
\item $f f^\dagger f = f$,
\item $f^\dagger f f^\dagger = f^\dagger$,
\item $(f f^\dagger)^\ast = f f^\dagger$,
\item $(f^\dagger f)^\ast = f^\dagger f$.
\end{enumerate}
We verify the Penrose equations in the induced spaces.

\textbf{(1)} $f\,f^\dagger\,f=f$:
\begin{equation}
\begin{aligned}
f\,f^\dagger\,f
&= \big(g_y^{-1}\! \circ A \circ g_x\big)\,\big(g_x^{-1}\! \circ A^\dagger \circ g_y\big)\,\big(g_y^{-1}\! \circ A \circ g_x\big) \\
&= g_y^{-1}\!\big(A A^\dagger A\, g_x\big) \\
&= g_y^{-1}\!\big(A\, g_x\big) \;=\; f,
\end{aligned}
\end{equation}
using $A A^\dagger A = A$.

\textbf{(2)} $f^\dagger f f^\dagger = f^\dagger$:
\begin{equation}
\begin{aligned}
f^\dagger f f^\dagger
&= \big(g_x^{-1}\! \circ A^\dagger \circ g_y\big)\,\big(g_y^{-1}\! \circ A \circ g_x\big)\,\big(g_x^{-1}\! \circ A^\dagger \circ g_y\big) \\
&= g_x^{-1}\!\big(A^\dagger A A^\dagger\, g_y\big) \\
&= g_x^{-1}\!\big(A^\dagger g_y\big) \;=\; f^\dagger,
\end{aligned}
\end{equation}
using $A^\dagger A A^\dagger = A^\dagger$.

\textbf{(3)} $(f f^\dagger)^\ast = f f^\dagger$ on $\mathcal{Y}$:
\begin{equation}
\begin{aligned}
f f^\dagger
&= g_y^{-1}\!\big(A A^\dagger\, g_y\big), \\
(f f^\dagger)^\ast
&= g_y^{-1}\!\big((A A^\dagger)^{\mathsf T} g_y\big)
= g_y^{-1}\!\big(A A^\dagger\, g_y\big)
= f f^\dagger,
\end{aligned}
\end{equation}
since $A A^\dagger$ is symmetric.

\textbf{(4)} $(f^\dagger f)^\ast = f^\dagger f$ on $\mathcal{X}$:
\begin{equation}
\begin{aligned}
f^\dagger f
&= g_x^{-1}\!\big(A^\dagger A\, g_x\big), \\
(f^\dagger f)^\ast
&= g_x^{-1}\!\big((A^\dagger A)^{\mathsf T} g_x\big)
= g_x^{-1}\!\big(A^\dagger A\, g_x\big)
= f^\dagger f,
\end{aligned}
\end{equation}
since $A^\dagger A$ is symmetric. All four conditions hold, hence $f^\dagger$ is the Moore--Penrose pseudoinverse of $f$ in the induced inner-product spaces.
\end{proof}

\section{Additional Results}
\label{app:additional_results}

\subsection{Style Transfer}
We expand the results of style transfer made in the main paper and show multiple more transformations in Figure~\ref{fig:style_results_app} following the same setup. 

\begin{figure*}[h]
\centering
\includegraphics[width=\linewidth, trim=30mm 0mm 35mm 0mm, clip]{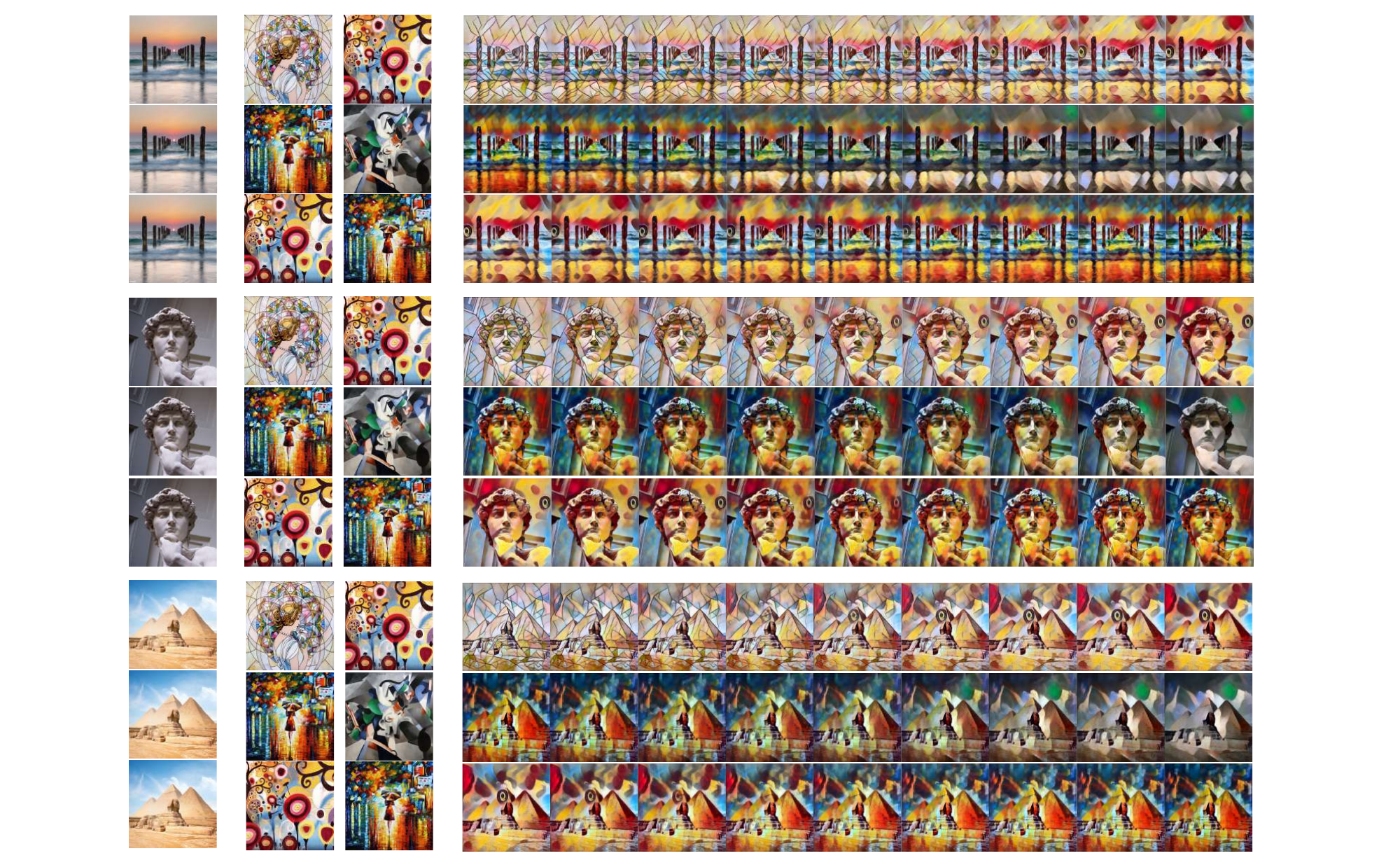}
\caption{\small \textbf{Style transfer examples.} Left: original image. Middle: style transfer using the left-side and right-side style images. Right: interpolation between the two styles.}
\label{fig:style_results_app}
\end{figure*}

\subsection{One-step Inverse Interpolation}
\label{app:interp}
We extend the results presented in the main text by providing additional interpolation examples on the MNIST (Figure~\ref{fig:mnist_interpoations}) and CelebA (Figure~\ref{fig:celeba_interps}) datasets.

\begin{figure*}[htbp]
\centering
\begin{subfigure}{0.4\linewidth}
  \centering
  \includegraphics[width=\linewidth]{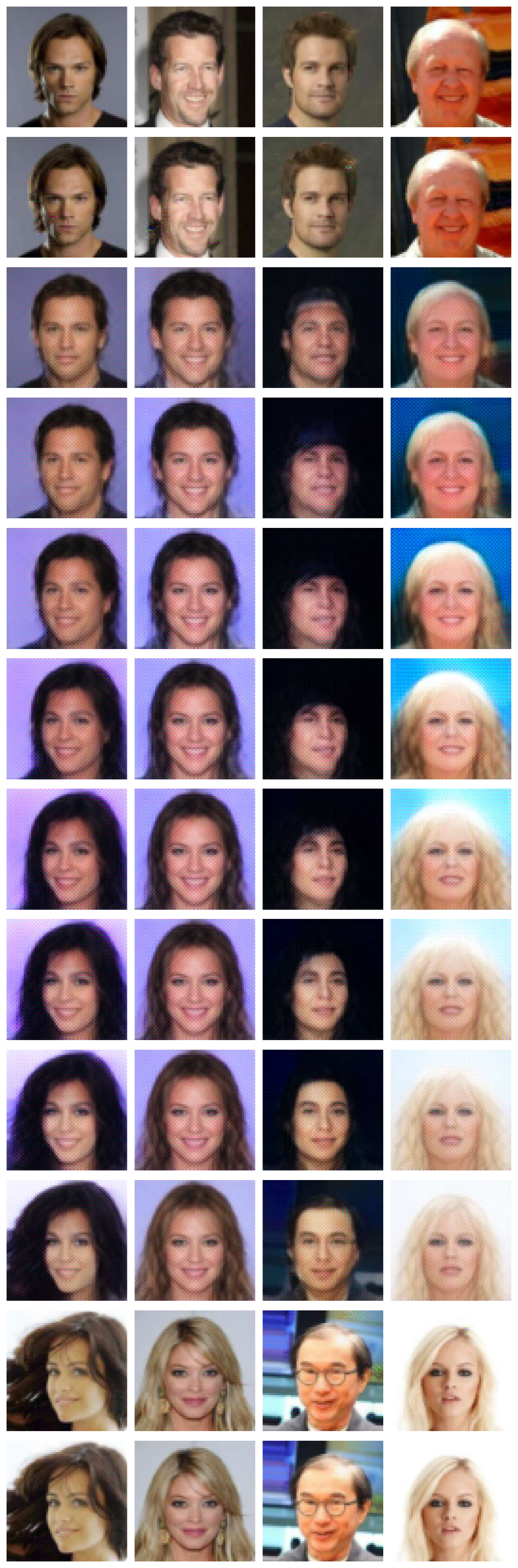}
\end{subfigure}%
\begin{subfigure}{0.4\linewidth}
  \centering
  \includegraphics[width=\linewidth]{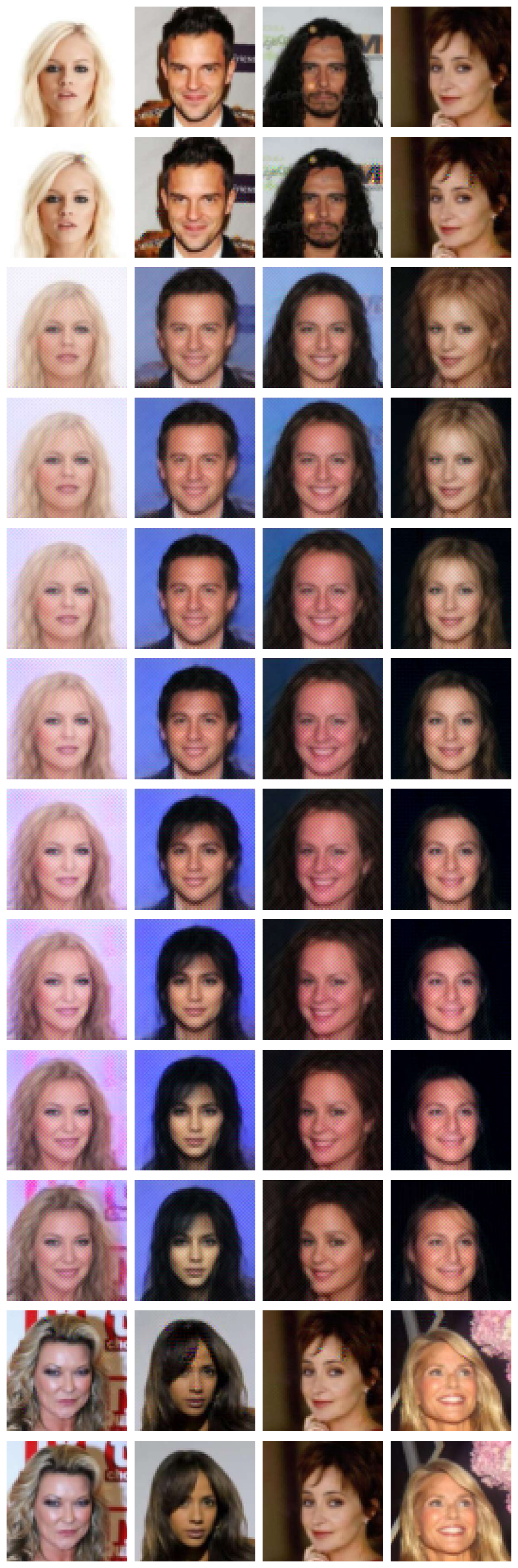}
\end{subfigure}
\caption{\small Additional interpolations of CelebA via inversion.}
\label{fig:celeba_interps}
\end{figure*}

\begin{figure*}[htbp]
\centering
\begin{subfigure}{0.4\linewidth}
  \centering
  \includegraphics[width=\linewidth]{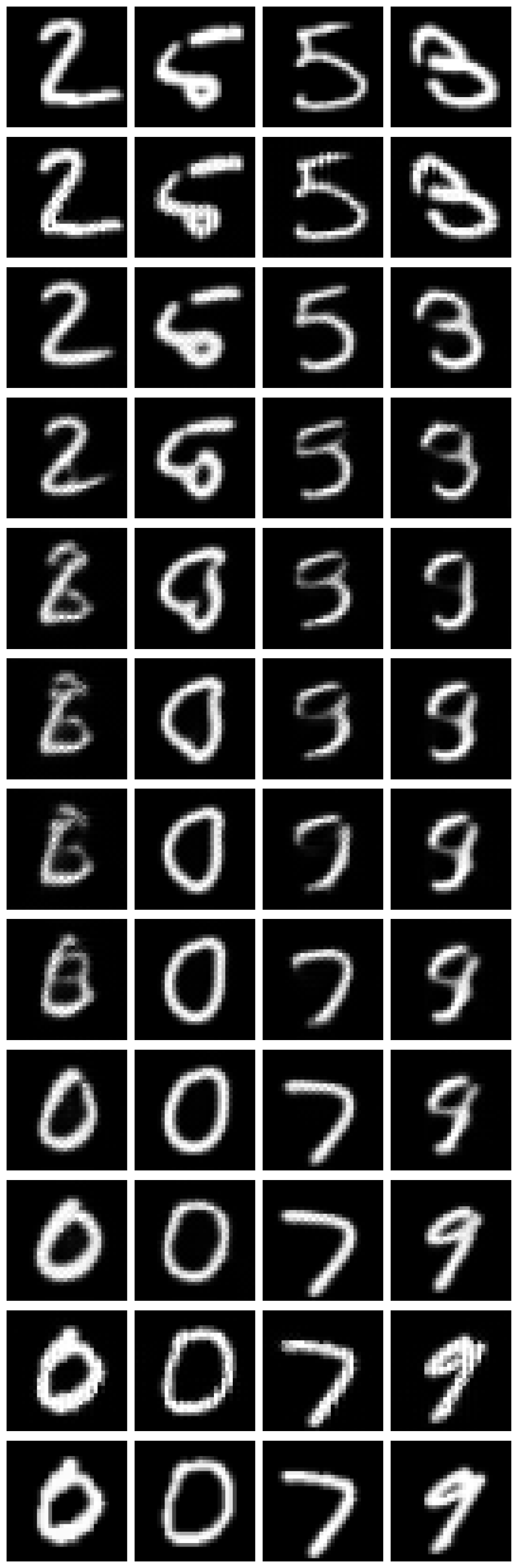}
\end{subfigure}%
\begin{subfigure}{0.4\linewidth}
  \centering
  \includegraphics[width=\linewidth]{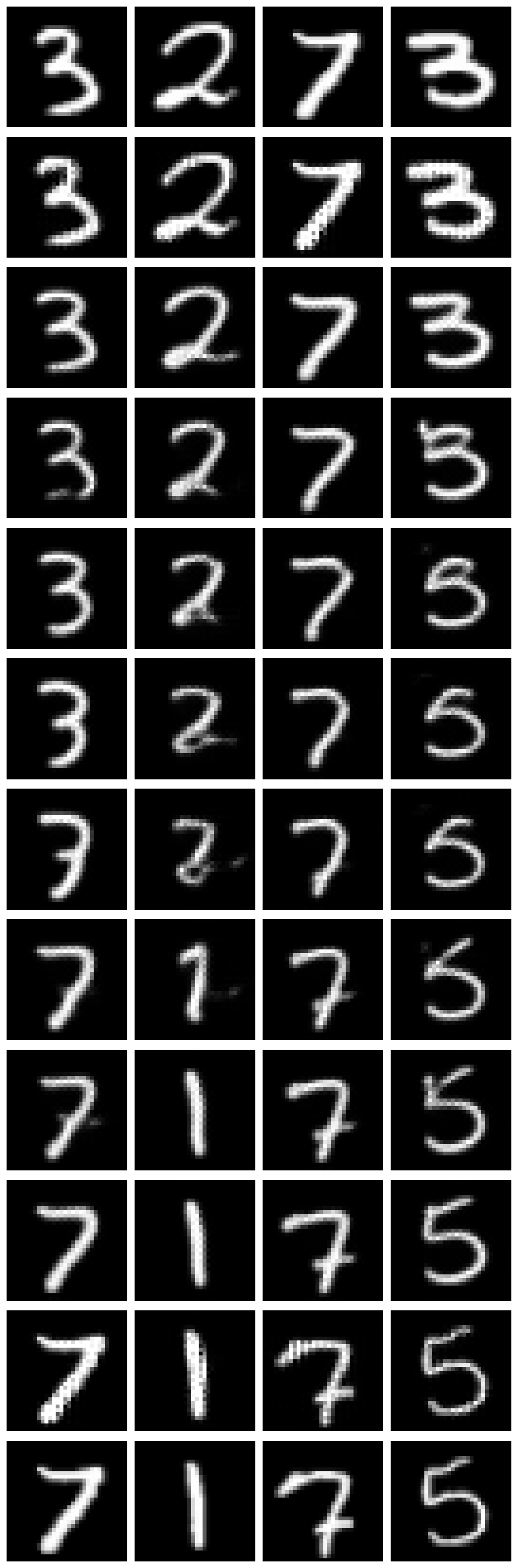}
\end{subfigure}
\caption{\small Additional interpolations of MNIST via inversion.}
\label{fig:mnist_interpoations}
\end{figure*}

\subsection{Additional Applications -- Full Results}
\label{app:additional-apps}

\paragraph{CelebFaces Attributes (CelebA).}
The dataset includes large pose variation, background clutter, diverse identities, and 40 binary attributes (e.g., \emph{Male}, \emph{Eyeglasses}, \emph{Blond Hair}). After training, our Linearizer attains \textbf{87.6\%} test accuracy; a ResNet-50 baseline reaches \textbf{86.1\%}.

\paragraph{Distillation of nonlinear disentanglement models.}
We distill pretrained encoders (VAE, FactorVAE, $\beta$-VAE; checkpoints from the public repository\footnote{\url{https://github.com/YannDubs/disentangling-vae}}) on \textbf{\textsc{dSprites}} and evaluate with \textbf{FactorVAE}, \textbf{SAP}, and \textbf{DCI} (disentanglement, completeness, informativeness). The Linearizer closely matches the original encoders (rows prefixed with \emph{Linearizer-}).

\begin{table*}[htpb]
  \centering
  \caption{Disentanglement metrics on \textsc{dSprites}. Higher is better ($\uparrow$).}
  \label{tab:disent-metrics}
  \small
  \setlength{\tabcolsep}{6pt}
  \begin{tabular}{lccccc}
    \toprule
    \textbf{Model} & \textbf{FactorVAE} $\uparrow$ & \textbf{SAP} $\uparrow$ & \textbf{DCI-D} $\uparrow$ & \textbf{DCI-C} $\uparrow$ & \textbf{DCI-I} $\uparrow$ \\
    \midrule
    VAE                  & .28$\pm$.03 & .02$\pm$.01 & .49$\pm$.02 & .27$\pm$.01 & .94$\pm$.00 \\
    Linearizer-VAE       & .29$\pm$.03 & .02$\pm$.00 & .49$\pm$.01 & .27$\pm$.01 & .93$\pm$.01 \\
    FactorVAE            & .33$\pm$.01 & .06$\pm$.01 & .64$\pm$.01 & .51$\pm$.01 & .94$\pm$.00 \\
    Linearizer-FactorVAE & .35$\pm$.03 & .05$\pm$.01 & .64$\pm$.01 & .50$\pm$.00 & .94$\pm$.00 \\
    BetaH                & .59$\pm$.05 & .23$\pm$.01 & .83$\pm$.02 & .71$\pm$.00 & .95$\pm$.00 \\
    Linearizer-BetaH     & .69$\pm$.05 & .24$\pm$.01 & .80$\pm$.00 & .71$\pm$.00 & .95$\pm$.00 \\
    \bottomrule
  \end{tabular}
\end{table*}

\paragraph{Weather Prediction.}
We evaluate multi-step regression where the model predicts future values given 96 past steps. We compare against \textbf{Autoformer}~\cite{wu2021autoformer} and \textbf{PatchTST}~\cite{nie2022time}. Columns denote horizons (steps ahead); scores are MSE (lower is better).

\begin{table}[htpb]
  \centering
  \caption{Weather forecasting (MSE; lower is better) at different horizons.}
  \label{tab:weather}
  \small
  \setlength{\tabcolsep}{8pt}
  \begin{tabular}{lcccc}
    \toprule
    \textbf{Model} & \textbf{96} & \textbf{192} & \textbf{336} & \textbf{720} \\
    \midrule
    Linearizer & 0.167 & 0.224 & 0.257 & 0.368 \\
    PatchTST   & 0.159 & 0.211 & 0.241 & 0.334 \\
    Autoformer & 0.247 & 0.321 & 0.341 & 0.411 \\
    \bottomrule
  \end{tabular}
\end{table}

\section{Derivation for Runge--Kutta Collapse to One Step}
\label{app:rk}

We follow the setup in Sec.~\ref{sec:app_flow}. Let $z_t := g(x_t)$ and $f(x,t)=g^{-1}(A_t\,g(x))$.  
In the induced coordinates, addition and scaling are Euclidean, so the ODE update acts linearly on $z_t$.

\paragraph{RK4 step in the induced space.}
Define $t_n = n\Delta t$ and $z_n := z_{t_n} = g(x_{t_n})$. The classical RK4 step from $t_n$ to $t_{n+1}=t_n+\Delta t$ is
\begin{align}
k_1 &= A_{t_n}\, z_n, \\
k_2 &= A_{t_n+\frac{\Delta t}{2}}\!\left(z_n + \tfrac{\Delta t}{2}\, k_1\right), \\
k_3 &= A_{t_n+\frac{\Delta t}{2}}\!\left(z_n + \tfrac{\Delta t}{2}\, k_2\right), \\
k_4 &= A_{t_n+\Delta t}\!\left(z_n + \Delta t\, k_3\right), \\
z_{n+1} &= z_n + \tfrac{\Delta t}{6}\,\big(k_1 + 2k_2 + 2k_3 + k_4\big).
\end{align}
Since each $k_i$ is linear in $z_n$, there exists a matrix $M_n$ such that $z_{n+1} = M_n\, z_n$. Expanding the linearity gives the explicit one-step operator
\begin{align}
\label{eq:Mn-rk4}
M_n
= I
+ \tfrac{\Delta t}{6}\Big[
A_{t_n}
&+ 2\,A_{t_n+\frac{\Delta t}{2}}\!\Big(I + \tfrac{\Delta t}{2}\,A_{t_n}\Big)\nonumber\\
&+ 2\,A_{t_n+\frac{\Delta t}{2}}\!\Big(I + \tfrac{\Delta t}{2}\,A_{t_n+\frac{\Delta t}{2}}\!\big(I + \tfrac{\Delta t}{2}\,A_{t_n}\big)\Big)\nonumber\\
&+ A_{t_n+\Delta t}\!\Big(I + \Delta t\,A_{t_n+\frac{\Delta t}{2}}\!\big(I + \tfrac{\Delta t}{2}\,A_{t_n+\frac{\Delta t}{2}}\!\big(I + \tfrac{\Delta t}{2}\,A_{t_n}\big)\big)\Big)
\Big].
\end{align}

\paragraph{Collapsed one-step operator.}
Iterating $N$ steps and collecting the product in latent space yields
\begin{equation}
\label{eq:B-rk4}
B \;\;:=\;\; \prod_{n=0}^{N-1} M_n,
\qquad\text{so that}\qquad
g(\hat{x}_1) \;=\; B\, g(x_0),
\quad
\hat{x}_1 \;=\; g^{-1}\!\big(B\, g(x_0)\big).
\end{equation}
The last two equations (\eqref{eq:Mn-rk4} and \eqref{eq:B-rk4}) are the RK4 analogues of the Euler collapse in the main text:
the multi-step solver is replaced by a single multiplication with $B$ computed once after training.

\paragraph{Time-invariant sanity check (optional).}
If $A_t \equiv A$ is constant, then \eqref{eq:Mn-rk4} reduces to the degree-4 Taylor polynomial of $e^{\Delta t A}$:
\begin{equation}
M_n \;=\; I + \Delta t\,A + \tfrac{(\Delta t)^2}{2}\,A^2 + \tfrac{(\Delta t)^3}{6}\,A^3 + \tfrac{(\Delta t)^4}{24}\,A^4,
\qquad
B \;=\; M^{\,N}.
\end{equation}
This matches the behavior of classical RK4 on linear time-invariant systems.

\section{Implementation Details}
\label{app:impl}

\subsection{Backbones and Block Structure}
\label{app:impl_g1}
We use \textbf{6 invertible blocks} in all models. The two architectures differ only in the internal block type:

\begin{table*}[h]
\centering
\small
\begin{tabular}{l l}
\toprule
Model family & Per-block flow (forward) \\
\midrule
Diffusion / Style & (optional Squeeze2x2 for 1ch) $\rightarrow$ ActNorm $\rightarrow$ Affine Coupling $(x_1\!\mid\!x_2)$ $\rightarrow$  \\
&  Affine Coupling $(x_2\!\mid\!y_1)$ $\rightarrow$ Invertible $1{\times}1$ Conv \\
IGN & SpatialSplit2x2Rand $\rightarrow$ Additive Couplings with $\{F,G\}$ $\rightarrow$ concat $\rightarrow$ PixelShuffle\,(2$\times$) \\
\bottomrule
\end{tabular}
\caption{\small Block-level flow for the two architectures; repeated 6 times.}
\label{tab:blockflow}
\end{table*}

\paragraph{Notes.}
\begin{itemize}
\item \textbf{ActNorm}: per-channel affine with data-dependent init on first batch.
\item \textbf{Affine coupling (Diffusion/Style)}: shift/log-scale predicted by a tiny U-Net conditioner; clamped $\log s$.
\item \textbf{Invertible $1{\times}1$ conv (Diffusion/Style)}: learned channel permutation/mixer (orthogonal init).
\item \textbf{SpatialSplit2x2Rand (IGN)}: deterministic $2{\times}2$ space partition into two streams; inverse merges.
\item \textbf{PixelShuffle / unshuffle (IGN)}: reversible $2{\times}$ spatial reindexing between blocks.
\item \textbf{We do not use \texttt{InvTanhScaled}}.
\end{itemize}

\paragraph{The blocks:}
\begin{enumerate}
    \item \textbf{Diffusion/Style.} We use a U-Net~\citep{ronneberger2015u}, adapting the implementation of \citet{karras2022elucidating}. We modify only two hyperparameters: \texttt{model\_channels} $=16$ and \texttt{channel\_mult} $=[1,1]$.
    \item \textbf{IGN.} We use bottleneck architecture, gradually downscaling to spatial size $1\times 1$ by stride 2 convolutions, and then using transposed convolutions back to original resolution. See table below for exact layers.
\end{enumerate}

\subsection{Time Path (Diffusion)}
\label{app:impl_g2}
In diffusion/flow-matching models, the scalar $t$ is embedded and fed to:
(i) the affine-coupling conditioners inside $g$, and
(ii) the time-dependent core $A_t$ (low-rank MLP).
Style and IGN models do not use $t$.

\subsection{Cores $A$}
\label{app:impl_g3}
\begin{itemize}
\item \textbf{Diffusion}: Two non-linear MLPs that produce two matrices for the low-rank parameterization $A = A_1 A_2$ with $\mathrm{rank}(A)=16$.
\item \textbf{Style.} A kernel $4\times 4$ is produced by a non-linear hypernetwork $L({\text{style})} = A^{\text{style}}_{\mathrm{ker}}$, which we then apply to the image via a 2D convolution (PyTorch).
\item \textbf{IGN}: diagonal $A$ (elementwise scale in latent); for idempotent IGN we threshold the diagonal with STE.
\end{itemize}

\subsection{Conditioners}
\label{app:impl_g4}

\paragraph{Diffusion / Style (Affine-coupling conditioners).}
Each coupling uses a tiny U-Net that maps $C_{\text{in}}=C/2 \rightarrow 2\,C_{\text{in}}$ (shift, $\log s$).
\begin{table}[h]
\centering
\small
\begin{tabular}{r l}
\toprule
\# & Layer (per conditioner) \\
\midrule
1 & GroupNorm $\rightarrow$ Conv$3\times3$ $\rightarrow$ SiLU \\
2 & Residual block $\times 2$ (GroupNorm, Conv$3\times3$, SiLU) \\
3 & Final Conv$3\times3$ to $2\,C_{\text{in}}$; clamp $\log s$ \\
\bottomrule
\end{tabular}
\caption{\small Minimal U-Net-style conditioner used in affine couplings.}
\label{tab:cond-diffstyle}
\end{table}

\paragraph{IGN (Additive-coupling CNNs).}
Each coupling uses a plain CNN (\emph{no U-Net skip concatenations}); forward is additive, inverse is exact.
\begin{table}[h]
\centering
\small
\begin{tabular}{r l}
\toprule
\# & $\mathbf{F}$ or $\mathbf{G}$ layer stack \\
\midrule
1 & Conv2d $2\!\to\!8$, kernel $4$, stride $2$ \\
2 & Conv2d $8\!\to\!32$, kernel $4$, stride $2$ \\
3 & Conv2d $32\!\to\!128$, kernel $4$, stride $2$ \\
4 & Conv2d $128\!\to\!512$, kernel $4$, stride $2$ \\
5 & ConvTranspose2d $512\!\to\!128$, kernel $4$, stride $2$ \\
6 & ConvTranspose2d $128\!\to\!32$, kernel $4$, stride $2$ \\
7 & ConvTranspose2d $32\!\to\!8$, kernel $4$, stride $2$ \\
8 & ConvTranspose2d $8\!\to\!2$, kernel $4$, stride $2$ $\rightarrow$ $\tanh$ \\
\bottomrule
\end{tabular}
\caption{\small CNN used for $F$ and $G$ inside IGN additive couplings.}
\label{tab:cond-ign}
\end{table}

\subsection{Losses (weights fixed)}

\paragraph{IGN.}
\[
\lambda_{\text{rec}} = 1.0,\quad
\lambda_{\text{sparse}} = 0.75,\quad
\lambda_{\text{iso}} = 0.001.
\]
We use (i) reconstruction MSE, (ii) sparsity/rank on the diagonal of $A$, and (iii) a light isometry term on $g$.

\paragraph{Diffusion / Flow-Matching.}
We use the main training loss described in the paper. For additional stability, we empirically found that adding the perceptual reconstruction terms
$d(x_0,\, g^{-1}(g(x_0)))$ and $d(x_1,\, g^{-1}(g(x_1)))$ (with $d$ as LPIPS~\citep{zhang2018unreasonable}), together with the alignment term
$\lVert A\,g(x_0) - g(x_1)\rVert^2$, yields more stable training and higher-quality models. The primary loss in the paper also uses LPIPS as the distance metric. No explicit weighting between losses was required.

\subsection{One-line Hyperparameter Summary}
See Table.\ref{tab:one-line}.
\begin{table*}[h]
\centering
\small
\begin{tabular}{l l}
\toprule
Item & Setting \\
\midrule
Invertible blocks & $6$ (both families) \\
Permutations & Diff/Style: invertible $1{\times}1$ conv; IGN: SpatialSplit2x2Rand + PixelShuffle \\
ActNorm & Enabled (both) \\
Core $A$ & Diff/Style: low-rank $16$; IGN: diagonal (STE for projector) \\
Time $t$ & Used in Diffusion/Style (conditioners + $A_t$); not used in IGN \\
\bottomrule
\end{tabular}
\caption{\small Everything needed to reproduce the scaffolding.}
\label{tab:one-line}
\end{table*}

\section{Linearizer Universal Final Fit Theorem}

\begin{definition}[Linearizer-realizable map]
A function $F^* : \mathbb{R}^N \to \mathbb{R}^M$ is \emph{Linearizer-realizable}
if there exist a linear map $A^* : \mathbb{R}^N \to \mathbb{R}^M$ and
diffeomorphisms $g_x^* : \mathbb{R}^N \to \mathbb{R}^N$ and
$g_y^* : \mathbb{R}^M \to \mathbb{R}^M$ such that
\[
    F^*(x) \;=\; (g_y^*)^{-1}\!\bigl( A^*\, g_x^*(x) \bigr)
    \qquad \text{for all } x \in \mathbb{R}^N.
\]
\end{definition}

\begin{fact}[Diffeomorphism Extension on Finite Sets]
\label{fact:diffeo-extend}
Let $X = \{x_1, \dots, x_n\} \subset \mathbb{R}^N$ be a finite set of $n$
distinct points, and let $P = \{p_1, \dots, p_n\} \subset \mathbb{R}^N$
be any other finite set of $n$ distinct points.
There exists a global diffeomorphism $g^* : \mathbb{R}^N \to \mathbb{R}^N$
such that $g^*(x_i) = p_i$ for all $i = 1, \dots, n$.
\end{fact}

\begin{fact}[INN Approximation, Teshima et al.\ 2020; informal]
\label{fact:inn-universal}
Let $K \subset \mathbb{R}^d$ be compact, and let
$h : \mathbb{R}^d \to \mathbb{R}^d$ be a $C^1$ diffeomorphism.
For any $\eta > 0$, there exists an invertible neural network (INN)
$H_\eta : \mathbb{R}^d \to \mathbb{R}^d$ such that
$\sup_{z \in K} \norm{H_\eta(z) - h(z)} < \eta$ and
$\sup_{z \in h(K)} \norm{H_\eta^{-1}(z) - h^{-1}(z)} < \eta$.
\end{fact}

\begin{lemma}[Existence of a Realizable Fit]
\label{lemma:existence}
Let $X = \{x_1, \dots, x_n\} \subset \mathbb{R}^N$ be a finite set of
distinct data points, and let $Y = \{y_1, \dots, y_n\} \subset \mathbb{R}^M$
be the corresponding target points.
There exists an ideal, Linearizer-realizable function $F^*$ that
perfectly fits the data, i.e., $F^*(x_i) = y_i$ for all $i$.
\end{lemma}

\begin{proof}
Let $P = \{p_1, \dots, p_n\} \subset \mathbb{R}^N$ be a set of $n$ distinct latent points lying on the first coordinate axis, defined as $p_i = (i, 0, \dots, 0)^\top$.

We construct the linear map $A^*: \mathbb{R}^N \to \mathbb{R}^M$ to preserve the distinctness of these points by mapping the first axis of the domain to the first axis of the codomain. We distinguish two cases based on the dimensions:
\begin{itemize}
    \item \textbf{Case 1 ($M \le N$):} We define $A^*$ as the projection onto the first $M$ coordinates. For any vector $v = (v_1, \dots, v_N)^\top \in \mathbb{R}^N$:
    \[
    A^*(v) = (v_1, \dots, v_M)^\top \in \mathbb{R}^M.
    \]
    \item \textbf{Case 2 ($M > N$):} We define $A^*$ as the canonical embedding (zero-padding) into the first $N$ coordinates. For any vector $v = (v_1, \dots, v_N)^\top \in \mathbb{R}^N$:
    \[
    A^*(v) = (v_1, \dots, v_N, \underbrace{0, \dots, 0}_{M-N})^\top \in \mathbb{R}^M.
    \]
\end{itemize}

In both cases, we define the target latent points as $Q = \{q_1, \dots, q_n\}$ where $q_i = A^* p_i$. Since the points $p_i$ differ in their first coordinate ($i$), and $A^*$ preserves the first coordinate in both cases (as $M, N \ge 1$), the resulting points $q_i$ are distinct in $\mathbb{R}^M$.

By Fact~\ref{fact:diffeo-extend}, there exists a global diffeomorphism $g_x^*: \mathbb{R}^N \to \mathbb{R}^N$ such that $g_x^*(x_i) = p_i$ for all $i$.
Similarly, there exists a global diffeomorphism $g_y^*: \mathbb{R}^M \to \mathbb{R}^M$ such that $g_y^*(y_i) = q_i$ for all $i$.

We define the ideal Linearizer $F^*$ as:
\[
F^*(x) := (g_y^*)^{-1}(A^* g_x^*(x)).
\]
Checking this function on the data points $x_i$:
\[
F^*(x_i) = (g_y^*)^{-1}(A^* p_i) = (g_y^*)^{-1}(q_i) = y_i.
\]
Thus, a Linearizer-realizable function $F^*$ exists that perfectly fits the finite dataset.
\end{proof}

\begin{theorem}[A Linearizer can fit any finite dataset]
\label{thm:main-approx}
Let $X = \{x_1, \dots, x_n\} \subset \mathbb{R}^N$, $N\geq 2$, be a finite set and $F: X \to \mathbb{R}^M$ be the target function.
For every $\varepsilon > 0$, there exist invertible neural networks
$G_x, G_y$ and a linear map $A$ such that the Linearizer
$\hat{F}(x) := G_y^{-1}(A\, G_x(x))$ achieves arbitrarily low error on the dataset:
\[
    \sup_{x \in X} \norm{\hat{F}(x) - F(x)} < \varepsilon.
\]
\end{theorem}

\begin{proof}
By Lemma~\ref{lemma:existence}, there exists an ideal,
Linearizer-realizable function $F^*(x) = (g_y^*)^{-1}(A^* g_x^*(x))$
such that $F(x) = F^*(x)$ for all $x \in X$.
We need to prove that our INN-based architecture $\hat{F}$ can
approximate $F^*$ on the compact set $X$.

Let $\varepsilon > 0$ be given. Set $A := A^*$.
The proof follows the standard $\epsilon$-$\delta$ error decomposition.
\[
\begin{aligned}
    \hat{F}(x) - F^*(x)
    &= G_y^{-1}\!\bigl(A^* G_x(x)\bigr)
        - (g_y^*)^{-1}\!\bigl(A^* g_x^*(x)\bigr) \\
    &= \underbrace{
            G_y^{-1}\!\bigl(A^* G_x(x)\bigr)
            - (g_y^*)^{-1}\!\bigl(A^* G_x(x)\bigr)
        }_{(I)}
        \;+\;
        \underbrace{
            (g_y^*)^{-1}\!\bigl(A^* G_x(x)\bigr)
            - (g_y^*)^{-1}\!\bigl(A^* g_x^*(x)\bigr)
        }_{(II)}.
\end{aligned}
\]
The ideal diffeomorphisms $g_x^*$ and $(g_y^*)^{-1}$ are continuous
and are being approximated on compact sets.
By Fact~\ref{fact:inn-universal}, we can choose INNs $G_x$ and $G_y$
that are arbitrarily close to $g_x^*$ and $g_y^*$ (and their inverses)
on these sets.

As both $(g_y^*)^{-1}$ and $A^*$ are continuous (and uniformly continuous on any compact set), we can make the norms of both terms (I) and (II)
arbitrarily small by choosing sufficiently accurate INN
approximations $G_x$ and $G_y$ (i.e., for a small enough $\delta$).
We can thus choose $\delta$ such that the total error is $<\varepsilon$.
\end{proof}

\section{Additional Illustrations}

\begin{figure*}[h!]
    \centering
    \includegraphics[width=\linewidth]{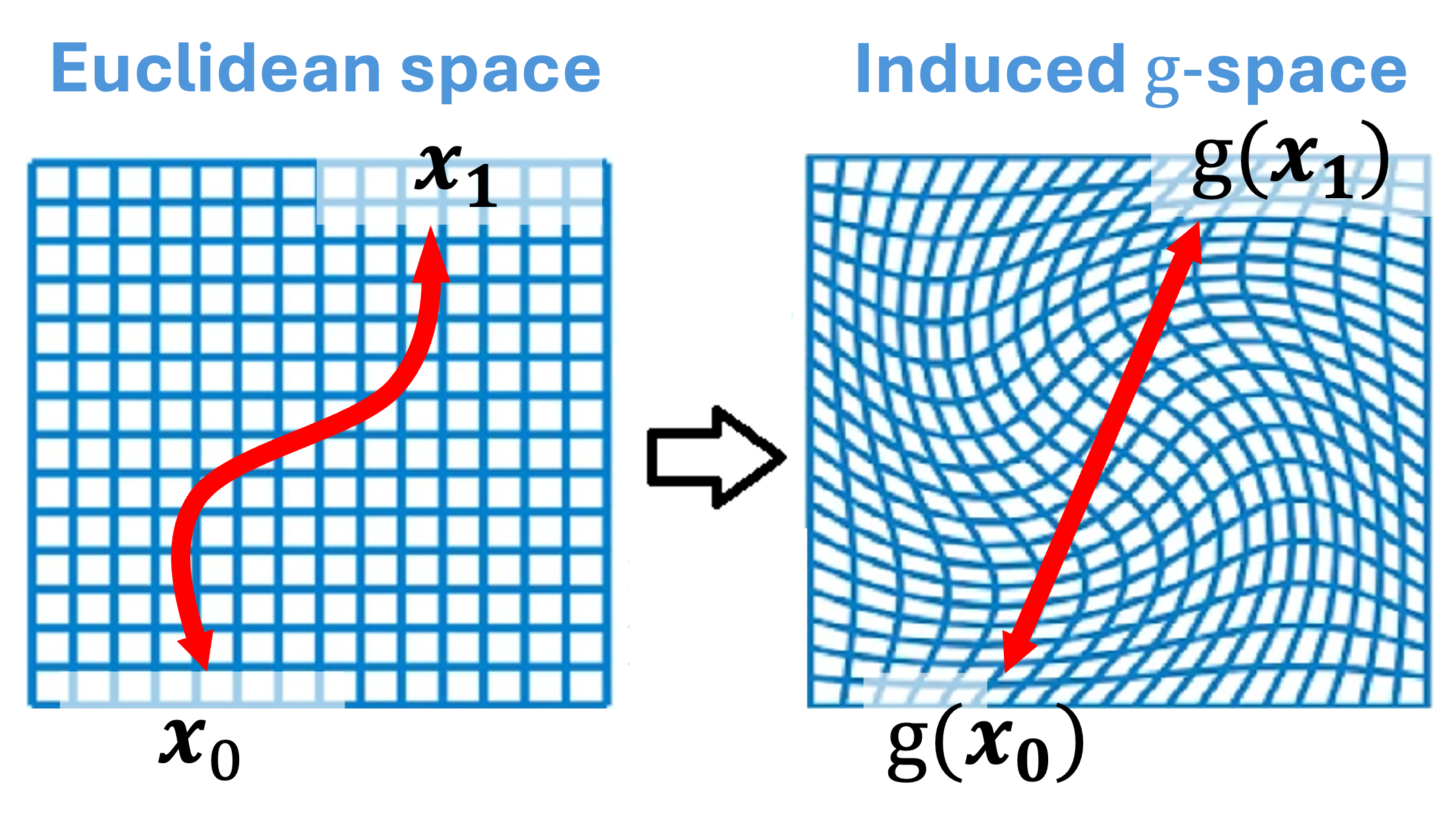}
    \caption{\small \textbf{Conceptual illustration of the geometric intuition.} We visualize the Linearizer in the context of Flow Matching (where $g_x=g_y=g$). (Left) The data manifold in standard Euclidean space. The optimal transport path between noise $x_0$ and data $x_1$ is a non-linear curve. (Right) The induced $g$-space. The network learns a coordinate transformation where this complex path is rectified into a simple straight line (a geodesic), enabling exact one-step generation.}
    \label{fig:illustrate}
\end{figure*}

\end{document}